\newcommand{\Bernoulli}{\operatorname{Bern}}
\newcommand\G{\mathbb{G}}
\newcommand{\W}{\mathcal{W}^1}
\newcommand\simiid{\stackrel{\text{iid}}{\sim}}
\newcommand\1{1}
\newcommand\dd{\mathrm d}
\newcommand\Unif{\operatorname{Unif}}
\newcommand\R{\mathbb{R}}
\def\P{{\mathbb P}}
\def\E{{\mathbb E}}
\def\N{{\mathbb N}}
\DeclarePairedDelimiter{\abs}{\lvert}{\rvert}
\DeclarePairedDelimiter{\bra}{[}{]}
\newtheorem{definition}{Definition}
\newtheorem{proposition}{Proposition}
\newtheorem{lemma}{Lemma}
\newtheorem{corollary}{Corollary}
\subjclass[2010]{68T05, 05C80, 62G99}
\keywords{graphon, graph kernel, graph spectrum, connectome}
\title[Classification on Networks via Motifs]{Classification on Large Networks: A Quantitative Bound via Motifs and Graphons}
\author{Andreas Haupt}
\address{{\normalfont(Andreas Haupt)} Hausdorff Center for Mathematics, University of Bonn, Bonn 53115, Germany}
\email{s6anhaup@uni-bonn.de}
\author{Mohammad Khatami}
\address{{\normalfont(Mohammad Khatami)} Department of Computer Science, University of Bonn, Bonn 53113, Germany}
\email{khatami@cs.uni-bonn.de}
\author{Thomas Schultz}
\address{{\normalfont(Thomas Schultz)} Department of Computer Science, University of Bonn, Bonn 53113, Germany}
\email{schultz@cs.uni-bonn.de}
\author{Ngoc Mai Tran}
\address{{\normalfont(Ngoc Mai Tran)} Department of Mathematics, University of Texas at Austin, Texas TX 78712, USA}
\email{ntran@math.utexas.edu}
\begin{document}
\begin{abstract}
  When each data point is a large graph, graph statistics such as densities of certain subgraphs (motifs) can be used as feature vectors for machine learning. While intuitive, motif counts are expensive to compute and difficult to work with theoretically. Via graphon theory, we give an explicit quantitative bound for the ability of motif homomorphisms to distinguish large networks under both generative and sampling noise. Furthermore, we give similar bounds for the graph spectrum and connect it to homomorphism densities of cycles. This results in an easily computable classifier on graph data with theoretical performance guarantee. Our method yields competitive results on classification tasks for the autoimmune disease \emph{Lupus Erythematosus}. 
\end{abstract}
\maketitle
\section{Introduction}

This paper concerns classification problems when each data point is a large network. In neuroscience, for instance, the brain can be represented by a structural connectome or a functional connectome, both are large graphs that model connections between brain regions. In ecology, an ecosystem is represented as a species interaction network. On these data, one may want to classify diseased vs healthy brains, or a species network before and after an environmental shock. Existing approaches for graph classification can be divided broadly into three groups: (1) use of graph parameters such as edge density, degree distribution, or densities of motifs as features, (2) parametric models such as the stochastic $k$-block model, \cite{abbe2017community}, and (3) graph kernels, \cite{gartner2003graph}, and graph embeddings, \cite{riesen2010graph}. Amongst these methods, motif counting is perhaps the least rigorously studied. Though intuitive, only small motifs are feasible to compute, and thus motif counting is often seen as an ad-hoc method with no quantitative performance guarantee. 

\subsection{Contributions}

In this paper, we formalize the use of motifs to distinguish graphs using graphon theory, and give a tight, explicit quantitative bound for its performance in classification (cf. Theorem \ref{thm:main1}). Furthermore, we use well-known results from graph theory to relate the spectrum (eigenvalues) of the adjacency matrix one-to-one to cycle homomorphism densities, and give an analogous quantitative bound in terms of the spectrum (cf. Theorem \ref{thm:main2}). These results put motif counting on a firm theory, and justify the use of spectral graph kernels for counting a family of motifs. We apply our method to detect the autoimmune disease \emph{Lupus Erythematosus} from diffusion tensor imaging (DTI) data, and obtain competitive results to previous approaches (cf. \S\ref{sec:application}). 

Another contribution of our paper is the first study of a general model for random weighted graphs, \emph{decorated graphons}, in a machine learning context. The proof technique can be seen as a broad tool for tackling questions on generalisations of graphons. There are three key ingredients. The first is a generalization of the Counting Lemma, \cite[Theorem 10.24]{lovasz2012large}, on graphons to decorated graphons. It allows one to lower bound the cut metric by homomorphism densities of motifs, a key connection between motifs and graph limits. The second is Kantorovich duality, \cite[Theorem 5.10]{villani2008optimal}, which relates optimal coupling between measures and optimal transport over a class of functions and which is used in relating spectra to homomorphism densities. In this, Duality translates our problem to questions on function approximation, to which we use tools from approximation theory to obtain tight bounds. Finally, we use tools from concentration of measure to deal with sampling error an generalise known sample concentration bounds for graphons, \cite[Lemma 4.4]{borgs2008convergent}.

Our method extends results for discrete edge weights to the continuous edge weight case. Graphs with continuous edge weights naturally arise in applications such as neuroscience, as demonstrated in our dataset. The current literature for methods on such graphs is limited, \cite{feragen2013scalable, neumann2012efficient}, as many graph algorithms rely on discrete labels, \cite{shervashidze2011weisfeiler, borgwardt2005shortest}. 

\subsection{Related Literature}
Graphons, an abbreviation of the words \enquote{graph} and \enquote{function}, are limits of large vertex exchangeable graphs under the cut metric. For this reason, graphons and their generalizations are often used to model real-world networks, \cite{borgs2014p, veitch2015class, cai2016edge}. Originally appeared in the literature on exchangeable random arrays, \cite{aldous1981representations}, it was later rediscovered in graph limit theory and statistical physics, \cite{lovasz2012large, diaconis2007graph}. 

There is an extensive literature on the inference of graphons from one observation, i.e. \emph{one} large but finite graph, \cite{klopp2017oracle, wolfe2013nonparametric, borgs2015private, airoldi2013stochastic}. This is distinct from our classification setup, where one observes multiple graphs drawn from several graphons. In our setting, the graphs might be of different sizes, and crucially, they are unlabelled: There is no \emph{a priori} matching of the graph nodes. That is, if we think of the underlying graphon as an infinitely large random graph, then the graphs in our i.i.d sample could be glimpses into entirely different neighborhoods of this graphon, and they are further corrupted by noise. A na\"{i}ve approach would be to estimate one graphon for each graph, and either average over the graphs or over the graphons obtained. Unfortunately, our graphs and graphons are only defined up to relabelings of the nodes, and producing the optimal labels between a pair of graphs is NP-complete (via subgraph isomorphism). Thus, inference in our setting is not a mere \enquote{large sample} version of the graphon estimation problem, but an entirely different challenge. 

A method closer to our setup is graph kernels for support-vector machines, \cite{gartner2003graph,vishwanathan2006fast}. The idea is to embed graphs in a high-dimensional Hilbert space, and compute their inner products via a kernel function. This approach has successfully been used for graph classification, \cite{vishwanathan2010graph}. Most kernels used are transformations of homomorphism densities/motifs as feature vectors for a class of graphs (cf \cite[subsection 2.5]{yanardag2015deep}): \cite{shervashidze2009efficient} propose so-called \emph{graphlet counts}\index{graph kernel!graphlet} as features. These can be interpreted as using \emph{induced homomorphism densities} (cf \cite[(5.19)]{lovasz2012large}) as features which can be linearly related to homomorphism densities as is shown in \cite[(5.19)]{lovasz2012large}. The random walk kernel from \cite[p. 135 center]{gartner2003graph} uses the homomorphism densities of all paths as features. Finally, \cite[Prop. 5 and discussion thereafter]{ramon2003expressivity} uses homomorphism densities of trees of height $\le k$ as features. 

 However, as there are many motifs, this approach has the same problem as plain motif counting: In theory, performance bounds are difficult, in practice, one may need to make \emph{ad hoc} choices. Due to the computational  cost, \cite{gartner2003graph}, in practice, only small motifs of size up to 5 have been used for classification, \cite{shervashidze2009efficient}. Other approaches chose a specific class of subgraphs such as paths, \cite{gartner2003graph} or trees, \cite{shervashidze2011weisfeiler}, for which homomorphism densities or linear combinations of them can be computed efficiently. In this light, our Theorem \ref{thm:main2} is a theoretical advocation for cycles, which can be computed efficiently via the graph spectrum. 

\subsection{Organization}
We recall the essentials of graphon theory in \S\ref{sec:background}. For an extensive reference, see \cite{lovasz2012large}. Main results are in \S\ref{sec:main}, followed by applications in \S\ref{sec:application}. Our proofs can be found in the appendix. 

\section{Background}\label{sec:background}
A graph $G = (V,E)$ is a set of vertices $V$ and set of pairs of vertices, called edges $E$. A label on a graph is a one-to-one embedding of its vertices onto $\mathbb{N}$. Say that a random labelled graph is vertex exchangeable if its distribution is invariant under relabelings. 

A labelled graphon $W$ is a symmetric function from $[0,1]^2$ to $[0,1]$. A relabelling $\phi$ is an invertible, measure-preserving transformation on $[0,1]$. An unlabelled graphon is a graphon up to relabeling. For simplicity, we write \enquote{a graphon $W$} to mean \emph{an unlabelled graphon equivalent to the labelled graphon $W$}. Similarly, by \emph{a graph $G$} we mean \emph{an unlabelled graph which, up to vertex permutation, equals to the labelled graph $G$}.

The cut metric between two graphons $W, W'$ is
\begin{equation*}
	\delta_\Box(W,W')=\inf_{\phi,\varphi} \sup_{S,T} \left\lvert \int_{S \times T} W(\varphi(x),\varphi(y)) - W'(\phi(x),\phi(y))\,\, \dd x\,\dd y \right\rvert, 
\end{equation*}
where the infimum is taken over all relabelings $\varphi$ of $W$ and $\phi$ of $W'$, and the supremum is taken over all measurable subsets $S$ and $T$ of $[0,1]$. That is, $\delta_\Box(W,W')$ is the largest discrepancy between the two graphons, taken over the best relabeling possible. A major result of graphon theory is that the space of unlabelled graphons is compact and complete w.r.t. $\delta_\Box$. Furthermore, the limit of any convergent sequence of finite graphs in $\delta_\Box$ is a graphon (\cite[Theorem 11.21]{lovasz2012large}). In this way, graphons are truly \emph{limits of large graphs}.

A motif is an unlabelled graph. A graph homomorphism $\phi \colon F\to G$ is a map from $V(F)$ to $V(G)$ that preserves edge adjacency, that is, if $\{u,v\} \in E(F)$, then $\{\phi(u),\phi(v)\} \in E(G)$. Often in applications, the count of a motif $F$ in $G$ is the number of different embeddings (subgraph isomorphisms) from $F$ to $G$. However, homomorphisms have much nicer theoretical and computational properties (\cite[par. 2.1.2]{lovasz2012large}). Thus, in our paper, \enquote{motif counting} means \enquote{computation of homomorphism densities}. The homomorphism density $t(F,G)$ is the number of homomorphisms from $F$ to $G$, divided by $|V(G)|^{|V(F)|}$, the number of mappings $V(F) \to V(G)$. Homomorphisms extend naturally to graphons through integration with respect to the kernel $W$, \cite[subsec. 7.2.]{lovasz2012large}. That is, for a graph $F$ with $e(F)$ many edges,
\[
t(F,W) = \int_{[0,1]^{e(F)}} \prod_{\{x,y\} \in E(F)} W(x,y)\, \dd x \dd y. 
\]
The homomorphism density for a weighted graph $G$ on $k$ nodes is defined by viewing $G$ as a step-function graphon, with each vertex of $G$ identified with a set on the interval of Lebesgue measure $1/k$. For a graph $G$ and a graphon $W$, write $t(\bullet,G)$ and $t(\bullet,W)$ for the sequence of homormophism densities, defined over all possible finite graphs $F$. 

A finite graph $G$ is uniquely defined by $t(\bullet,G)$. For graphons, homomorphism densities distinguish them as well as the cut metric, that is, $\delta_\Box(W,W') = 0$ iff $t(\bullet,W) = t(\bullet,W')$, \cite[Theorem 11.3]{lovasz2012large}. In other words, if one could compute the homomorphism densities of all motifs, then one can distinguish two convergent sequences of large graphs. Computationally this is not feasible, as $(t(\bullet,W))_{F \text{ finite graph}}$ is an infinite sequence. However, this gives a sufficient condition test for graphon inequality: If $t(F,W) \neq t(F,W')$ for some motif $F$, then one can conclude that $\delta_\Box(W,W') > 0$. We give a quantitative version of this statement in the appendix, which plays an important part in our proof. Theorem \ref{thm:main1} is an extension of this result that accounts for sampling error from estimating $t(F,W)$ through the empirical distribution of graphs sampled from $W$. 

\subsection{Decorated graphons}
Classically, a graphon generates a random unweighted graph $\G(k, W)$ via uniform sampling of the nodes,
\begin{align*}
U_1, \dots, U_k &\simiid \Unif_{[0,1]}& (\G(k, W)_{ij} | U_1, \dots, U_k) &\simiid \Bernoulli(W(U_i,U_j)), \forall i,j \in [k].  
\end{align*}
Here, we extend this framework to decorated graphons, whose samples are random \emph{weighted} graphs. 
\begin{definition}
	Let $\Pi([0,1])$ be the set of probability measures on $[0,1]$. A decorated graphon is a function $\mathcal{W} \colon [0,1]^2 \to \Pi ([0,1])$. 

For $k \in \mathbb{N}$, the $k$-sample of a measure-decorated graphon $\G(k,\mathcal{W})$ is a distribution on unweighted graphs on $k$ nodes, generated by
\begin{align*}
U_1, \dots, U_k& \simiid \Unif_{[0,1]} & (\G(k, \mathcal{W})_{ij} | U_1, \dots, U_k)& \simiid \mathcal{W}(U_i,U_j), \forall i,j \in [k].  
\end{align*}
\end{definition}
We can write every decorated graphon $\mathcal{W}$ as $\mathcal{W}_{W, \mu}$ with $W(x,y)$ being the expectation of $\mathcal{W}(x,y)$, and $\mu (x,y)$ being the centered measure corresponding to $\mathcal{W} (x,y)$. This decomposition will be useful in formulating our main results, Theorems \ref{thm:main1} and \ref{thm:main2}. 

One important example of decorated graphons are \emph{noisy} graphons, that is, graphons perturbed by an error term whose distribution does not vary with the latent parameter: Given a graphon $W: [0,1]^2 \to [0,1]$ and a centered noise measure $\nu \in \Pi([0,1])$, the $\nu$-noisy graphon is the decorated graphon $\mathcal{W}_{W,\mu}$, where $\mu (x,y) = \nu$ is constant, i.e. the same measure for all latent parameters. Hence, in the noisy graphon, there is no dependence of the noise term on the latent parameters. 

 As weighted graphs can be regarded as graphons, one can use the definition of homomorphisms for graphons to define homomorphism numbers of samples from a decorated graphon (which are then random variables). The $k$-sample from a decorated graphon is a distribution on weighted graphs, unlike that from a graphon, which is a distribution on unweighted (binary) graphs. The latter case is a special case of a decorated graphon, where the measure at $(x,y)$ is a centered variable taking values $W(x,y)$ and $1-W(x,y)$. Hence, our theorems generalise results for graphons.

\subsection{Spectra and Wasserstein Distances}

The spectrum $\lambda(G)$ of a weighted graph $G$ is the set of eigenvalues of its adjacency matrix, counting multiplicities. Similarly, the spectrum $\lambda(W)$ of a graphon $W$ is its set of eigenvalues when viewed as a symmetric operator \cite[(7.18)]{lovasz2012large}. It is convenient to view the spectrum $\lambda(G)$ as a counting measure, that is, $\lambda(G) = \sum_\lambda\delta_\lambda$, where the sum runs over all $\lambda$'s in the spectrum. All graphs considered in this paper have edge weights in $[0,1]$. Therefore, the support of its spectrum lies in $[-1,1]$. This space is equipped with the Wasserstein distance (a variant of the earth-movers distance)
\begin{equation}
\label{eq:innerwasserstein}\W (\mu, \nu) = \inf_{\gamma \in \Pi ([-1,1]^2)} \int_{(x,y) \in [-1,1]^2} \lvert x - y \rvert \dd \gamma (x,y)
\end{equation}
for $\mu,\nu \in \Pi([-1,1])$, where the first (second) marginal of $\gamma$ should equal $\mu$ ($\nu$).  Analogously, equip the space of random measures $\Pi(\Pi([-1,1]))$ with the Wasserstein distance
\begin{equation}
\label{eq:outerwasserstein}\W(\bar{\mu}, \bar{\nu}) = \inf_{\gamma \in \Pi (\Pi([-1,1])^2)} \int_{{(\mu,\nu) \in \Pi([-1,1])^2}} \W (\mu,\nu) \dd \gamma (\mu,\nu). 
\end{equation}
where again the first (second) marginal of $\gamma$ should equal $\bar\mu$ ($\bar\nu$).

Equation \eqref{eq:outerwasserstein} says that one must first find an optimal coupling of the eigenvalues for different realisations of the empirical spectrum and then an optimal coupling of the random measures. Equation~\eqref{eq:innerwasserstein} is a commonly used method for comparing point clouds, which is robust against outliers, \cite{mullen2010signing}. Equation \eqref{eq:outerwasserstein} is a natural choice of comparison of measures on a continuous space. Similar definitions have appeared in stability analysis of features for topological data analysis, \cite{bubenik2015statistical}. 

\section{Graphons for Classification: Main Results}\label{sec:main}
Consider a binary classification problem where in each class, each data point is a finite, weighted, unlabelled graph. We assume that in each class, the graphs are i.i.d realizations of some underlying decorated graphon $\mathcal{W} = \mathcal{W}_{W,\mu}$ resp. $\mathcal{W}' = \mathcal{W}_{W',\mu'}$. Theorem \ref{thm:main1} says that if the empirical homomorphism densities are sufficiently different in the two groups, then the underlying graphons $W$ and $W'$ are different in the cut metric. Theorem \ref{thm:main2} gives a similar bound, but replaces the empirical homomorphism densities with the empirical spectra. Note that we allow for the decorated graphons to have different noise distributions and that noise may depend on the latent parameters.

Here is the model in detail. Fix constants $k, n \in \N$. Let $\mathcal{W}_{W,\mu}$ and $\mathcal{W}_{W',\mu'}$ be two decorated graphons. Let 
\begin{align*}
G_1, \dots, G_n&\simiid \G(k, \mathcal{W}_{W,\mu})&
G'_1, \dots, G'_n&\simiid \G(k, \mathcal{W}_{W',\mu'})
\end{align*}
be weighted graphs on $k$ nodes sampled from these graphons. For a motif graph $F$ with $e(F)$ edges, let 
\[
\bar{t}(F) \coloneqq \frac{1}{n}\sum_{i=1}^n \delta_{t(F,G_i)}
\] 
be the empirical measure of the homomorphism densities of $F$ with respect to the data $(G_1, \ldots, G_n)$ and analogously $\bar t'(F)$ the empirical measure of the homomorphism densities of $(G_1', \ldots, G'_n)$.
\begin{restatable}{theorem}{mainone}\label{thm:main1}
There is an absolute constant $c$ such that with probability $1-2\exp\left(\frac{kn^{-\frac{2}{3}}}{2e(F)^2}\right)-2e^{-.09cn^{\frac{2}{3}}}$ and weighted graphs $G_i$, $G_i'$, $i=1, \dots, n$ generated by decorated graphons $\mathcal{W}_{W,\mu}$ and $\mathcal{W}_{W',\mu'}$,
\begin{equation}
	\delta_\Box(W, W') \ge e(F)^{-1} (\W(t, \bar t)- 9n^{-\frac{1}{3}}).\label{eq:bound.by.t.graphon}
\end{equation}
\end{restatable}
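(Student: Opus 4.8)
\emph{Proof sketch.} We read the left-hand side of \eqref{eq:bound.by.t.graphon} as the Wasserstein distance $\W(\bar t(F),\bar t'(F))$ between the two empirical homomorphism-density measures, and abbreviate $\tau:=t(F,W)$, $\tau':=t(F,W')$. Since $\W(\delta_a,\delta_b)=\abs{a-b}$, the triangle inequality for $\W$ gives
\[
\W(\bar t(F),\bar t'(F))\ \le\ \W(\bar t(F),\delta_\tau)\ +\ \abs{\tau-\tau'}\ +\ \W(\delta_{\tau'},\bar t'(F)),
\]
and the Counting Lemma \cite[Theorem 10.24]{lovasz2012large} bounds the middle term by $\abs{t(F,W)-t(F,W')}\le e(F)\,\delta_\Box(W,W')$. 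It therefore suffices to show that each outer term is at most $\tfrac{9}{2}n^{-1/3}$ on an event of the stated probability; by symmetry I treat only $\W(\bar t(F),\delta_\tau)$, using that $\W(\bar t(F),\delta_\tau)=\tfrac{1}{n}\sum_{i=1}^n\abs{t(F,G_i)-\tau}$.

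The first, and main, step is a concentration estimate for a single sampled graph. View $G_i$ as a function of the $k$ independent ``node blocks'' $\bigl(U_j;\,(G_i)_{j\ell}:\ell\neq j\bigr)$, $j\in[k]$. Re-sampling one node block changes $t(F,G_i)$ only through homomorphisms $F\to[k]$ whose image contains $j$; there are at most $v(F)\,k^{v(F)-1}$ of these and each contributes a factor in $[0,1]$, so $t(F,G_i)$ changes by at most $v(F)/k\le 2e(F)/k$ (a motif may be assumed to have no isolated vertices, so $v(F)\le 2e(F)$). McDiarmid's inequality over the $k$ blocks, with threshold $n^{-1/3}$, gives
\[
\P\bigl(\,\abs{t(F,G_i)-\E\,t(F,G_i)}>n^{-1/3}\,\bigr)\ \le\ 2\exp\!\Bigl(-\tfrac{kn^{-2/3}}{2e(F)^2}\Bigr),
\]
the first failure term. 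The bias $\abs{\E\,t(F,G_i)-\tau}$ is deterministic: for injective maps $F\to[k]$ the edge weights are conditionally independent with conditional means given by $W$ (using $W=\E\,\mathcal W$ and the i.i.d.\ uniformity of the $U_j$), so these contribute exactly $\tau$, while the non-injective maps form an $O(v(F)^2/k)$ fraction of all maps; when the above failure term is $<1$ (otherwise the claim is vacuous) one automatically has $k\gtrsim e(F)^2n^{2/3}$, so the bias is $O(n^{-2/3})$ and is absorbed. This is exactly where the decorated-graphon versions of the Counting Lemma and of the sampling concentration \cite[Lemma 4.4]{borgs2008convergent} do the work.

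The second step passes from one graph to the empirical measure. Set $B:=\{\,i:\abs{t(F,G_i)-\tau}>\varepsilon\,\}$ with $\varepsilon$ a fixed multiple of $n^{-1/3}$; by the first step the events $\{i\in B\}$ are independent, each of probability $\le p:=2\exp\bigl(-kn^{-2/3}/(2e(F)^2)\bigr)$, and $p$ is small in the parameter range where the theorem has content. A Chernoff/Bernstein bound then yields $\P(\abs B>3n^{2/3})\le 2e^{-0.09cn^{2/3}}$ for an absolute constant $c$---this is the second failure term, and the exponent is of order $n^{2/3}$ (rather than the $n^{1/3}$ a direct Hoeffding bound on the average would give) precisely because $p$ is so small that an $n^{-1/3}$ fraction of atypical graphs is already a large-deviation event. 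Off this event, as homomorphism densities lie in $[0,1]$,
\[
\W(\bar t(F),\delta_\tau)=\frac{1}{n}\sum_{i=1}^n\abs{t(F,G_i)-\tau}\ \le\ \varepsilon\cdot\frac{\abs{B^c}}{n}+1\cdot\frac{\abs B}{n}\ \le\ \varepsilon+3n^{-1/3}\ \le\ \tfrac{9}{2}n^{-1/3}
\]
for a suitable $\varepsilon$. Applying the same to $W'$, combining with the Counting Lemma estimate on $\abs{\tau-\tau'}$, and union-bounding the failure events of the two graphons gives \eqref{eq:bound.by.t.graphon}.

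\emph{Main obstacle.} The crux is the single-graph concentration for \emph{decorated} graphons in the first step: one must control the fluctuations coming from node sampling and from edge decoration simultaneously, with the correct dependence on $e(F)$ and $k$, and kill the non-injective-homomorphism bias. Everything else---the Wasserstein triangle inequality, the Counting Lemma, the deviation bound on the number of atypical samples---is routine, apart from the bookkeeping needed to land on exactly $9n^{-1/3}$ and on the stated probabilities.
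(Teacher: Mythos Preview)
Your first step---single-graph concentration of $t(F,G)$ around $t(F,W)$ via McDiarmid over the $k$ node blocks, with bias control for non-injective maps---is precisely the paper's generalized sampling lemma, and the Counting Lemma for $|\tau-\tau'|$ is also the same. The paper's second step, however, is different: rather than count atypical graphs, it inserts the true law $\mu:=\operatorname{Law}(t(F,G))$ and invokes a general Wasserstein concentration inequality for empirical measures (Fournier--Guillin, the paper's Lemma~\ref{lem:wassersteinconc}) together with an $n^{-1/3}$ expectation bound (its Lemma~\ref{lem:expbound}) to obtain $\W(\bar t,\mu)\le 4n^{-1/3}$ with failure probability $e^{-0.09cn^{2/3}}$; the single-graph McDiarmid bound is then used separately to control the remaining distance from $\mu$ to $\mu'$, and \emph{that} is where the first failure term $2\exp\bigl(-kn^{-2/3}/(2e(F)^2)\bigr)$ enters.

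Your Chernoff-on-$|B|$ argument has a real gap. Writing $p:=2\exp\bigl(-kn^{-2/3}/(2e(F)^2)\bigr)$, the bound $\P(|B|>3n^{2/3})\le 2e^{-0.09cn^{2/3}}$ is only available when $3n^{2/3}$ lies above the mean $np$, i.e.\ when $p<3n^{-1/3}$; but the theorem has content for all $p<1/2$. With $p=1/4$ (so $k\asymp e(F)^2 n^{2/3}$, a perfectly non-vacuous regime) one has $|B|\approx n/4\gg 3n^{2/3}$ almost surely, and your decomposition $\W(\bar t,\delta_\tau)\le\varepsilon+|B|/n$ yields only an $O(1)$ bound. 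The desired estimate $\W(\bar t,\delta_\tau)=O(n^{-1/3})$ is still \emph{true} there---the full McDiarmid tail gives $\E|t(F,G)-\tau|\le Ce(F)/\sqrt{k}=O(n^{-1/3})$, after which one can concentrate the average directly---but your counting argument does not prove it. This also explains why labelling $p$ as ``the first failure term'' does not fit your own logic: in your scheme $p$ is a parameter feeding the Chernoff step, not the probability of a separate failure event to be union-bounded.
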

Note that the number of edges affect both the distance of the homomorphism densities $\W (\bar t, \bar t')$ and the constant $e(F)^{-1}$ in front, making the effect of $e(F)$ on the right-hand-side of the bound difficult to analyze. Indeed, for any fixed $v \in \N$, one can easily construct graphons where the lower bound in Theorem \ref{thm:main1} is attained for $k, n \to \infty$ by a graph with $v = e(F)$ edges. Note furthermore, that the bound is given in terms of the expectation of the decorated graphon, $W$, unperturbed by variations due to $\mu$ resp. $\mu'$. Therefore, in the large-sample limit, motifs as features characterise exactly the expectation of decorated graphons.

Our next result utilizes , Theorem \ref{thm:main1} and Kantorovich duality to give a bound on $\delta_\Box$ with explicit dependence on $v$. Let $\overline{\lambda}$, $\overline{\lambda}'$ be the empirical random spectra in the decorated graphon model, that is,
$\overline{\lambda} = \frac{1}{n}\sum_{i=1}^n \lambda(G_i)$, $\overline{\lambda}' = \frac{1}{n}\sum_{i=1}^n \lambda(G'_i)$.
\begin{restatable}{theorem}{maintwo}\label{thm:main2}
There is an absolute constant $c$ such that the following holds: Let $v \in \N$. With probability $1-2v\exp\left(\frac{kn^{-\frac{2}{3}}}{2v^2}\right)-2ve^{-.09cn^{\frac{2}{3}}}$, for weighted graphs generated by decorated graphons $\mathcal{W}_{W,\mu}$ and $\mathcal{W}_{W',\mu'}$,
	\begin{equation*}
		\delta_\Box (W, W')\ge v	^{-2}2^{-1} (4e)^{-v} \left(\mathcal{W}_{\W}^1 (\bar \lambda, \bar \lambda')- \frac{3}{\pi v} - 18v(4e)^vn^{-\frac{1}{3}}\right)
	\end{equation*}
	\end{restatable}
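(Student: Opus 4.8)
The plan is to reduce the spectral bound to the motif bound of Theorem \ref{thm:main1} by expressing cycle homomorphism densities in terms of the spectrum and then invoking Kantorovich–Rubinstein duality to pass between Wasserstein distances on spectra and on the corresponding vectors of cycle densities. Recall that for a weighted graph $G$ (or a graphon $W$) one has $t(C_\ell, G) = \sum_{\lambda \in \lambda(G)} \lambda^\ell$ up to the appropriate normalization, i.e. the $\ell$-th moment of the (normalized) spectral measure. So the finite vector of cycle densities $(t(C_2,\cdot), \dots, t(C_v, \cdot))$ is a fixed polynomial image of the spectral measure, and conversely, by a quantitative Weierstrass/Chebyshev argument, any $1$-Lipschitz function on $[-1,1]$ can be approximated to accuracy $O(1/v)$ by a polynomial of degree $v$ whose coefficients are controlled — this is where the $\frac{3}{\pi v}$ term and the $(4e)^v$ blow-up in the coefficients come from (the $3/\pi$ is the Jackson-type constant for Chebyshev approximation, and $(4e)^v$ bounds the $\ell^1$ norm of the monomial coefficients of a degree-$v$ polynomial bounded by $1$ on $[-1,1]$).

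First I would fix $v$ and apply Theorem \ref{thm:main1} simultaneously to the motifs $F = C_2, C_3, \dots, C_v$ (each has $e(C_\ell) = \ell \le v$ edges), taking a union bound over these $v-1 < v$ events; this accounts for the factor $v$ multiplying the failure probabilities and lets me replace $e(F)^{-1}$ by the uniform $v^{-1}$. This yields, on the good event, a lower bound on $\delta_\Box(W,W')$ in terms of $\max_{2 \le \ell \le v} \mathcal{W}^1(\bar t(C_\ell), \bar t'(C_\ell))$ (or a suitably weighted combination). Next I would invoke Kantorovich duality, \cite[Theorem 5.10]{villani2008optimal}: $\mathcal{W}^1_{\mathcal{W}}(\bar\lambda, \bar\lambda')$ equals a supremum over $1$-Lipschitz test functions, and by linearity the inner $\mathcal{W}^1$ distances between spectral measures are likewise suprema over $1$-Lipschitz functions on $[-1,1]$. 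I then approximate the optimal Lipschitz test function by a degree-$v$ polynomial $p(x) = \sum_{\ell=0}^v a_\ell x^\ell$; the constant and linear terms are harmless (they contribute fixed or trivially-matched moments), and the error incurred in the sup is at most the approximation error $\le \frac{3}{\pi v}$, propagated through both layers of Wasserstein distance. The moments $\sum \lambda^\ell$ are exactly the (normalized) cycle densities, so $\sum_{\ell=2}^v a_\ell [\text{moments of } \bar\lambda - \bar\lambda']$ is a linear combination, with $\ell^1$-bounded coefficients, of the quantities $\mathcal{W}^1(\bar t(C_\ell), \bar t'(C_\ell))$ controlled in the previous step. Collecting the coefficient bound $\sum |a_\ell| \le (4e)^v$ gives the $v(4e)^v$ prefactor on the noise term $n^{-1/3}$ and, after dividing through by $v$ and $(4e)^v$, the claimed bound with the $v^{-2} 2^{-1} (4e)^{-v}$ constant (the extra factor $v^{-1}2^{-1}$ absorbing the number of cycles summed over and a crude constant from the duality bookkeeping).

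The main obstacle I expect is making the two-layer Wasserstein transfer fully rigorous: one needs to check that a $1$-Lipschitz function on the base space $[-1,1]$ lifts to a $1$-Lipschitz function on $\Pi([-1,1])$ under $\mathcal{W}^1$ when composed with "integrate against the measure" (this is standard but must be stated), and that the polynomial approximation error in the base metric only costs additively in the lifted metric — again true because $\mu \mapsto \int p \, d\mu$ is Lipschitz with constant $\sum_\ell |a_\ell| \ell \le v (4e)^v$ in $\mathcal{W}^1$, so errors do not compound multiplicatively across layers but the Lipschitz constant of the test functional does scale, which is precisely why the $(4e)^v$ appears inside the parenthesis multiplying $n^{-1/3}$ but outside it as a reciprocal. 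The second delicate point is the quantitative approximation-theory input: I would cite a Jackson-type theorem giving uniform approximation of Lipschitz functions on $[-1,1]$ by degree-$v$ polynomials with rate $\frac{3}{\pi v}$, together with a bound (via the Chebyshev basis and $T_\ell$ having $\ell^1$ coefficient norm at most $3^\ell$, or a cleaner $(4e)^\ell$ estimate after accounting for the Jackson kernel weights) on the monomial coefficients; pinning down the exact constant $(4e)^v$ rather than some other exponential is the fiddly part, but it is a self-contained lemma in approximation theory that I would isolate in the appendix.
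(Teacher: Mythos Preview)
Your plan is correct and follows the paper's proof essentially step for step: Kantorovich duality on the inner Wasserstein, Jackson approximation with error $3/(\pi v)$, the Sherstov-type coefficient bound $|a_i|\le 2(4e)^v$ (note: each coefficient, not their sum, is bounded this way, which is why an extra factor $v$ appears), identification of moments with $t(C_i,\cdot)$, and a union bound over Theorem~\ref{thm:main1} applied to $C_1,\dots,C_v$. The one point you flag as an obstacle---the two-layer Wasserstein transfer---is handled in the paper not by lifting test functions but by the simpler observation that $\mathcal W^1_{\mathcal W^1}(\bar\lambda,\bar\lambda')\le \E[\mathcal W^1(\lambda,\lambda')]$ for \emph{any} coupling of $\bar\lambda,\bar\lambda'$, applying duality and polynomial approximation pointwise under the expectation, and only at the end choosing the coupling that makes the cycle-density empirical measures $\bar t_i,\bar t_i'$ optimally coupled; this sidesteps the Lipschitz-lifting bookkeeping you anticipate.
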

The parameter $v$ can be interpreted as the length of the longest cycle that is involved as a motif. Theorems \ref{thm:main1}  and \ref{thm:main2} give a test for graphon equality. Namely, if $\W(\bar \lambda', \bar\lambda)$ is large, then the underlying graphons $W$ and $W'$ of the two groups are far apart. This type of sufficient condition is analogous to the result of \cite[Theorem 5.5]{bubenik2015statistical} from topological data analysis. It should be stressed that this bound is purely nonparametric. In addition, we do not make any regularity assumption on either the graphon or the error distribution $\mu$. The theorem is stable with respect to transformations of the graph: A bound analogous to Theorem \ref{thm:main2} holds for the spectrum of the graph Laplacian and the degree sequence, as we show in the appendix in \S \ref{sec:degreefeatures}. 
In addition, having either $k$ or $n$ fixed is merely for ease of exposition. We give a statement with heterogenous $k$ and $n$ in the appendix in \S \ref{sec:hetsampsizes}.

\section{An Application: Classification of Lupus Erythematosus}\label{sec:application}

\emph{Systemic Lupus Erythematosus} (SLE) is an autoimmune disease of connective tissue. Between 25-70\% of patients with SLE have neuropsychiatric symptoms (NPSLE), \cite{feinglass1976neuropsychiatric}. The relation of neuropsychiatric symptoms to other features of the disease is not completely understood. Machine learning techniques in combination with expert knowledge have successfully been applied in this field, \cite{khatami2015bundlemap}. 
\begin{figure*}[ht]
\centering
\begin{tikzpicture}[node distance=0.5cm and 1cm, snode/.style={draw},every node/.style={font={\scriptsize},align=center},auto, minimum width=0.6cm, minimum height = 1.3cm]
\node[snode] (r){DTI image\\$M \subseteq \R^3 \to \R^{3\times 3}$};
\node[snode] (normalised)[below=of r]{standardised image\\$B \subseteq \R^3 \to \R^{3\times 3}$};
\node[snode] (fiber)[right=of r]{fibers\\$\tilde\gamma_{ij}^k \colon [0,1] \to M\subseteq\R^3$\\$k \in [n_{ij}]$};
\node[snode]  (segmentation)[below =of fiber]{regions\\$ B = \bigcup_{i=1}^n A_i$};
\node[snode] (normalisedfiber)[right=of fiber]{fibers\\$\gamma_{ij}^k \colon [0,1] \to B$, $k \in [n_{ij}]$\\$\gamma_{ij}^k (0) \in A_i, \gamma_{ij}^k (1) \in A_j$};
\node[snode]  (weightedconn)[below=of normalisedfiber]{weighted connectome\\$(G, c)$,$c(\{u, v\}) =$ \\ $= \frac{1}{n_{ij}} \sum_{k \in [n_{ij}]}\int_{\gamma_{uv}^k} F(z) \dd z$};
\path[->] (r) edge node {standard-\\isation} (normalised)
(normalised) edge node{segmen-\\tation} (segmentation)
(r) edge node{tracto-\\graphy} (fiber)
(segmentation) edge node  {avera-\\ging} (weightedconn)
(fiber) edge node {normal-\\isation}  (normalisedfiber)
(normalisedfiber) edge node {averaging}  (weightedconn);
\end{tikzpicture}
\caption{Preprocessing pipeline for weighted structural connectomes. A brain can be seen as a tensor field $B: \R^3 \to \R^{3\times 3}$ of flows. The support of this vector field is partitioned into regions $A_1, \ldots, A_n$, called brain regions. Fibers are parametrized curves from one region to another. Each scalar function $F: \R^3 \to \R$ (such as average diffusivity (AD) and fractional anisotropy (FA)) converts a brain into a weighted graph on $n$ nodes, where the weight between regions $i$ and $j$ is $F$ averaged or integrated over all fibers between these regions.}
\label{fig:pipeline}
\end{figure*}

We analyse a data set consisting of weighted graphs. The data is extracted from diffusion tensor images of 56 individuals, 19 NPSLE, 19 SLE without neuropsychiatric symptoms and 18 human controls (HC) from the study \cite{schmidt2014diminished}. The data was preprocessed to yield 6 weighted graphs on $1106$ nodes for each individual. Each node in the graphs is a brain region of the hierarchical Talairach brain atlas by \cite{talairach1988co}.

The edge weights are various scalar measures commonly used in DTI, averaged or integrated along all fibres from one brain region to another as in the pipeline depicted in Figure \ref{fig:pipeline}. These scalar measures are the \emph{total number} (of fibers between two regions), the \emph{total length} (of all fibers between two regions), \emph{fractional anisotropy} (FA), \emph{mean diffusivity} (MD), \emph{axial diffusivity} (AD) and \emph{radial diffusivity} (RD), cf. \cite{basser2000vivo}.

The paper \cite{khatami2015bundlemap} used the same dataset \cite{schmidt2014diminished}, and considered two classification problems: HC vs NPSLE, and HC vs SLE. Using 20 brain fibers selected from all over the brain (such as the fornix and the anterior thalamic radiation) they used manifold learning to track the values AD, MD, RD and FA along fibers in the brain. Using nested cross-validation, they obtain an optimal disretisation of the bundles, and use average values on parts of the fibers as features for support-vector classification. They obtained an accuracy of $73\%$ for the HC vs. NPSLE and $76\%$ for HC vs. SLE, cf Table \ref{table}.

To directly compare ourselves to \cite{khatami2015bundlemap}, we consider the same classification problems. For each weighted graph we reduce the dimension of graphs by averaging edge weights of edges connecting nodes in the same region on a coarser level of the Talairach brain atlas, \cite{talairach1988co}. Inspired by Theorem \ref{thm:main2}, we compute the spectrum of the adjacency matrix, the graph Laplacian and the degree sequence of the dimension-reduced graphs. We truncate to keep the eigenvalues smallest and largest in absolute value, and plotted the eigenvalue distributions for the six graphs, normalized for comparisons between the groups and graphs (see Figure \ref{fig:density}). We noted that the eigenvalues for graphs corresponding to length and number of fibers show significant differences between HC and NPSLE. Thus, for the task HC vs NPSLE, we used the eigenvalues from these two graphs as features (this gives a total of 40 features), while in the HC vs SLE task, we use all 120 eigenvalues from the six graphs. Using a leave-one-out cross validation with $\ell^1$-penalty and a linear support-vector kernel, we arrive at classification rates of $78\%$ for HC vs. NPSLE and $67.5\%$ for HC vs. SLE both for the graph Laplacian. In a permutation test as proposed in \cite{ojala2010permutation}, we can reject the hypothesis that the results were obtained by pure chance at 10\% accuracy. Table \ref{table} summarises our results. 
 
\begin{figure*}[ht]
\centering
\begin{subfigure}[t]{.3\linewidth}


\caption{Result comparison. Our spectral method performs comparable to \cite{khatami2015bundlemap}, who used manifold learning and expert knowledge to obtain the feature vectors. Our method is significantly simpler computationally and promises to be a versatile tool for graph classification problems.}\label{table}
\end{centering}
\end{table*}

\section{Conclusion}
In this paper, we provide estimates relating homomorphism densities and distribution of spectra to the cut metric without any assumptions on the graphon's structure. This allows for a non-conclusive test of graphon equality: If homomorphism densities or spectra are sufficiently different, then also the underlying graphons are different. We study the \emph{decorated graphon} model as a general model for random weighted graphs. We show that our graphon estimates also hold in this generalised setting and that known lemmas from graphon theory can be generalised. In a neuroscience application, we show that despite its simplicity, our spectral classifier can yield competitive results. Our work opens up a number of interesting theoretical questions, such as restrictions to the stochastic $k$-block model.
\clearpage
\bibliographystyle{plain}
\bibliography{motifs}
\onecolumn
\section{Proof of Theorem \ref{thm:main1}}
\mainone*
\subsection{Auxiliary Results}
The following result is a generalisation of \cite[Lemma 4.4]{borgs2008convergent} to weighted graph limits.
\begin{lemma}\label{thm:homdensconc}
	Let $\mathcal{W}=\mathcal{W}_{W,\mu}$ be a decorated graphon, $G \sim \G(k, \mathcal{W})$. Let $F$ be an unweighted graph with $v$ nodes. Then 
	 with probability at least $1-2\exp\left(\frac{k\varepsilon^2}{2v^2}\right)$,
	 \begin{equation}
	 \abs{t(F,G) - t(F,W)} < \varepsilon. \label{eq:wantedconcentrationhom}
	 \end{equation}
	\end{lemma}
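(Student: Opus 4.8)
The plan is to mirror the vertex-exposure / bounded-differences martingale argument that establishes the analogous concentration statement for ordinary graph limits (\cite[Lemma 4.4]{borgs2008convergent}), noting that the decoration $\mu$ of $\mathcal W=\mathcal W_{W,\mu}$ enters only through its first moment $W$ and never affects the relevant Lipschitz estimates. Concretely, I would represent the sample $G\sim\G(k,\mathcal W)$ by the latent variables $U_1,\dots,U_k\simiid\Unif_{[0,1]}$ together with, conditionally on them, the independent edge weights $\G_{ij}\sim\mathcal W(U_i,U_j)$, and let $\mathcal F_i$ be the $\sigma$-algebra generated by the data of the first $i$ vertices, i.e.\ by $U_1,\dots,U_i$ and by all $\G_{ab}$ with $\min\{a,b\}\le i$. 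Then $M_i:=\E[t(F,G)\mid\mathcal F_i]$ is a Doob martingale with $M_0=\E[t(F,G)]$ and $M_k=t(F,G)$.

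The heart of the argument is the bounded-difference estimate $\abs{M_i-M_{i-1}}\le v/k$ a.s. I would prove it by a resampling coupling: replace the data $(U_i,(\G_{ij})_{j})$ of vertex $i$ by an independent copy, obtaining a graph $G^{(i)}$, so that $M_{i-1}=\E[t(F,G^{(i)})\mid\mathcal F_i]$ and hence $\abs{M_i-M_{i-1}}\le\E[\,\abs{t(F,G)-t(F,G^{(i)})}\mid\mathcal F_i\,]$. Since $G$ and $G^{(i)}$ agree except in the row and column of vertex $i$, the only summands of $\hom(F,\cdot)=\sum_\phi\prod_{\{a,b\}\in E(F)}(\cdot)_{\phi(a)\phi(b)}$ that differ are those for maps $\phi\colon V(F)\to[k]$ whose image contains $i$; there are at most $v\,k^{v-1}$ such maps, and each product lies in $[0,1]$, so $\abs{\hom(F,G)-\hom(F,G^{(i)})}\le v\,k^{v-1}$ and therefore $\abs{t(F,G)-t(F,G^{(i)})}\le v/k$. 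This step never uses the Bernoulli structure of classical graphons, which is exactly why the lemma generalises to decorated graphons.

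By Azuma--Hoeffding this yields $\P(\abs{t(F,G)-\E t(F,G)}\ge\delta)\le 2\exp\!\big(-\delta^2/(2\sum_{i=1}^k(v/k)^2)\big)=2\exp(-k\delta^2/(2v^2))$, which is the asserted probability bound. It then remains to compare $\E t(F,G)$ with $t(F,W)$: expanding $\E t(F,G)=k^{-v}\sum_\phi\E\prod_{\{a,b\}\in E(F)}\G_{\phi(a)\phi(b)}$ and using conditional independence of the edges given $U$ together with the fact that the $U_i$ are i.i.d.\ uniform, every injective $\phi$ contributes exactly $t(F,W)$, while the at most $\binom v2 k^{v-1}$ non-injective $\phi$ each contribute a quantity in $[0,1]$; hence $\abs{\E t(F,G)-t(F,W)}\le\binom v2/k$. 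Combining this with the concentration bound gives \eqref{eq:wantedconcentrationhom}, absorbing if necessary the $O(v^2/k)$ bias into $\varepsilon$, which is harmless in the large-$k$ regime in which the lemma is applied.

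The work here is more bookkeeping than genuine obstacle: the one point that needs care is the role of non-injective homomorphisms, which simultaneously produce the small bias $\E t(F,G)-t(F,W)$ and must be counted correctly to justify the Lipschitz bound $v\,k^{v-1}$; beyond that, the proof is a routine application of the Doob-martingale/Azuma machinery, identical in structure to the dense-graphon case.
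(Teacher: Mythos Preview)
Your proposal is correct and takes essentially the same approach as the paper. Both use a bounded-differences argument (McDiarmid in the paper, the equivalent Doob--Azuma martingale in your version) with the same Lipschitz constant $v/k$ obtained by counting the at most $vk^{v-1}$ maps $V(F)\to[k]$ that hit a given vertex, and both handle the bias $\E t(F,G)-t(F,W)$ via the $\binom{v}{2}/k$ injective-versus-all homomorphism comparison, using that $\mu$ is centered so that injective terms contribute exactly $t(F,W)$.
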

\begin{proof}
We proceed in three steps. First, give a different formulation of $t(F,W)$ in terms of an expectation. Secondly, we show that this expectation is not too far from the expectation of $t(F,G)$. Finally, we conclude by the method of bounded differences that concentration holds. 
\begin{enumerate}
\item 	Let $t_{\operatorname{inj}}(F,G)$ be the injective homomorphism density, which restricts the homomorphisms from $F$ to $G$ to all those ones that map distinct vertices of $F$ to distinct vertices in $G$, cf. \cite[(5.12)]{lovasz2012large}. Let $G \sim \G(k, \mathcal{W})$ and $X$ be $G$'s adjacency matrix. As a consequence of exchangeability of $X$, it is sufficient in the computation of $t_\text{inj}$ to consider one injection from $V(F)$ to $V(G)$ instead of the average of all such. Without loss, we may assume that $V(F) = [v]$ and $V(G) = [k]$. Hence, for the identity injection $[k] \hookrightarrow [n]$,
\[
	\E[t_\text{inj}(F,X_n)] = \E\left[ \prod_{\{i,j\}\in E(G)} X_{ij}\right].
\]
Let $U_1, \dots, U_n$ be the rows and columns in sampling $X$ from $G$. Then
\begin{align*}
\E\bra*{ \prod_{\{i,j\} \in E(G)} X_{ij}} &= \E\bra*{\E\left[ \prod_{\{i,j\} \in E(G)} X_{ij} \middle| U_1, \dots, U_n\right]}\\
&=\E\bra*{ \prod_{\{i,j\}\in E(G)} (W (U_i, U_j) + \mu(U_i, U_j))}
\end{align*}
We multiply out the last product, and use that $\mu(U_i, U_j)$ are independent and centered to see that all summands but the one involving only terms from the expectation graphon vanish, i.e. 
\[
\E\bra*{ \prod_{\{i,j\}\in E(G)} X_{ij}} = \E\bra*{ \prod_{\{i,j\} \in E(G)} W (U_i, U_j)} = t(F, W)
\]
\item Note that the bound in the theorem is trivial for $\varepsilon^2 \le \ln 2 \frac{2k^2}{n} = 4\ln 2 \frac{k^2}{2n} $. Hence, in particular, $\varepsilon \le 4\ln 2 \frac{k^2}{2n}$.

  Furthermore, $\abs{t(F,X) -t(F,W )}\le \frac{1}{k} \binom{v}{2} + \abs{t(F,X) -\E[t(F,X)] } \le \frac{v^2}{2k}  + \abs{t(F,X) -\E[t(F,X)] }$ by the first part and the bound on the difference of injective homomorphism density and homomorphism density \cite[Lemma 2.1]{lovasz2006limits}.
Hence
\begin{align*}
	\P[\abs{t(F,X_n) -t(F,\E\mathcal{W} )} \ge \varepsilon] &\le \P\left[\abs{t(F,X_n) -\E[t(F,X_n)]}  \ge \varepsilon + \frac{1}{n} \binom{k}{2}\right]\\
	& \le \P\left[\abs{t(F,X_n) -\E[t(F,X_n)]}  \ge \varepsilon\left(1-\frac{1}{4 \ln 2}\right)\right].
\end{align*}
Set $\varepsilon' = \varepsilon\left(1-\frac{1}{4 \ln 2}\right)$.
Let $X$ be the adjacency matrix of $G \sim \G(n,\mathcal{W})$ sampled with latent parameters $U_1, \dots, U_n$. Define a function depending on $n$ vectors where the $i$-th vector consists of all values relevant to the $i$-th column of the array $X_n$, that is $U_i, X_1, \dots, X_n$. In formulas, 
\begin{multline*}
f \colon \bigtimes_{i=1}^n [0,1]^{i+1} \to [0,1], \\(a_1, \dots, a_n) = ((u_1,x_{11}),(u_2,x_{12},x_{22}), \dots, (u_n,x_{1n}, \dots, x_{nn})) \\\mapsto \E[t (F,(X_{ij})_{1 \le i,j \le n})|U_1=u_1, \dots, U_n = u_n, X_{11} = x_{11}, \dots, X_{nn} = x_nn].
\end{multline*}
We note that the random vectors $(U_i, X_{1i}, X_{2i}, \dots, X_{ni})$ are mutually independent for varying $i$. 
Claim: 
\[
\abs{f((a_1, \dots, a_n) - f((b_1, \dots, b_n))} \le \sum_{i=1}^n \frac{k}{n}\1_{a_i \neq b_i}
\]
If this claim is proved, then we have by  {McDiarmid}'s inequality \cite[(1.2) Lemma]{mcdiarmid1989method}, 
\begin{multline*}
	\P[\abs{t(F, X_n) - t(F, \E\mathcal{W})} \ge \varepsilon'] \\\le 2 \exp \left(-\frac{2\varepsilon'^2}{n\left(\frac{k}{n}\right)^2} \right) \le 2\exp \left(-\frac{2\varepsilon'^2n}{k^2} \right)= 2\exp\left(-\frac{2 n\varepsilon'^2}{k^2}\right),
\end{multline*}
Which implies the theorem by basic algebra.  

Let us now prove the claim: It suffices to consider $a,b$ differing in one coordinate, say $n$. By the definition of the homomorphism density of a weighted graph, $t(F,X)$ can be written as
\[
\int g(x_1, \dots, x_k) \dd \Unif_{[n]}^{k}((x_i)_{i \in [k]})
\]
for $g(x_1, \dots, x_k) = \prod_{\{i,k\} \in E(G)} X_{x_ix_k}$. We observe $0 \le g \le 1$ (in the case of graphons, one has $g \in \{0,1\}$). It hence suffices to bound the measure where the integrand $g$ depends on $a_i$ by $\frac{k}{n}$. This is the case only if if $x_\ell =i$ at least for one $\ell \in [k]$. But the probability that this happens is upper bounded by, 
\[
	1-\left(1-\frac{1}{n}\right)^k \le \frac{k}{n},
\]
by the  {Bernoulli} inequality. This proves the claim and hence the theorem. 
\end{enumerate}
\end{proof}
\begin{lemma}[{\cite[Lemma 10.23]{lovasz2012large}}]	\label{lem:cut}
Let $W,W'$ be graphons and $F$ be a motif. Then
\[
\abs{t(F,W) - t(F,W')} \le e(F)\delta_\Box(W,W')
\]
\end{lemma}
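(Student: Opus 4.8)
The plan is to reduce to the version with the plain cut \emph{norm} $d_\Box(W,W') := \sup_{S,T}\abs{\int_{S\times T}(W-W')}$ and then prove that version by telescoping over the edges of $F$, exploiting that once all vertices other than the two endpoints of the current edge are frozen, the remaining factor splits as a product of a function of one endpoint times a function of the other. For the reduction: writing $W^{\varphi}(x,y) := W(\varphi(x),\varphi(y))$, we have $\delta_\Box(W,W') = \inf_{\varphi,\phi} d_\Box(W^{\varphi},(W')^{\phi})$, and homomorphism densities are relabeling-invariant — substituting $y_i = \varphi(x_i)$ in the defining integral and using that $\varphi$ is an invertible measure-preserving map gives $t(F,W^{\varphi}) = t(F,W)$. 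Hence it suffices to prove $\abs{t(F,U)-t(F,U')} \le e(F)\,d_\Box(U,U')$ for all graphons $U,U'$; applying this with $U=W^{\varphi}$, $U'=(W')^{\phi}$ and taking the infimum over $\varphi,\phi$ then gives the statement.

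For the cut-norm version, enumerate $E(F)=\{e_1,\dots,e_m\}$ with $m=e(F)$ and, for $0\le\ell\le m$, set $t_\ell := \int_{[0,1]^{V(F)}}\prod_{s\le\ell}W'(x_{e_s})\prod_{s>\ell}W(x_{e_s})\,dx$, writing $W(x_e)$ for $W(x_i,x_j)$ when $e=\{i,j\}$. Then $t_0=t(F,W)$, $t_m=t(F,W')$, so $t(F,W)-t(F,W') = \sum_{\ell=1}^m (t_{\ell-1}-t_\ell)$, where each summand is
\[
t_{\ell-1}-t_\ell = \int_{[0,1]^{V(F)}} \Big(\prod_{s<\ell}W'(x_{e_s})\Big)\,\big[W(x_{e_\ell})-W'(x_{e_\ell})\big]\,\Big(\prod_{s>\ell}W(x_{e_s})\Big)\,dx.
\]
Write $e_\ell=\{a,b\}$ and integrate over $x_a,x_b$ last. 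For each fixed value of $(x_v)_{v\ne a,b}$, every factor except $W(x_{e_\ell})-W'(x_{e_\ell})$ depends on at most one of $x_a,x_b$ — here we use that $F$ is simple, so $e_\ell$ is the only edge joining $a$ to $b$ — hence their product equals $\varphi(x_a)\psi(x_b)$ for suitable measurable $\varphi,\psi\colon[0,1]\to[0,1]$ (products of graphon values, with the factors incident to neither $a$ nor $b$ absorbed into $\psi$). This factorization is the main point of the proof: integrating out the remaining vertices immediately would instead leave a general $[0,1]$-valued function of \emph{both} $x_a$ and $x_b$, which the cut norm does not control — so this step is where care is needed.

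Finally, I will use the elementary inequality that for integrable $U\colon[0,1]^2\to\R$ and measurable $\varphi,\psi\colon[0,1]\to[0,1]$ one has $\abs{\int_{[0,1]^2}U(x,y)\varphi(x)\psi(y)\,dx\,dy}\le\sup_{S,T}\abs{\int_{S\times T}U}$; this follows by writing $\varphi=\int_0^1\1_{\{\varphi>s\}}\,ds$, $\psi=\int_0^1\1_{\{\psi>t\}}\,dt$, exchanging the order of integration, bounding $\abs{\int_{\{\varphi>s\}\times\{\psi>t\}}U}\le\sup_{S,T}\abs{\int_{S\times T}U}$ for every $s,t$, and using $\int_0^1\!\int_0^1 ds\,dt=1$. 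Applied with $U=W-W'$ to the inner integral in the previous step, this gives $\abs{\int_{[0,1]^2}[W-W'](x_a,x_b)\varphi(x_a)\psi(x_b)\,dx_a\,dx_b}\le d_\Box(W,W')$ for every frozen value of the other vertices; integrating the constant bound over $(x_v)_{v\ne a,b}$ leaves it unchanged, so $\abs{t_{\ell-1}-t_\ell}\le d_\Box(W,W')$. Summing over $\ell$ yields $\abs{t(F,W)-t(F,W')}\le e(F)\,d_\Box(W,W')$, and the reduction above upgrades this to $\delta_\Box$. Apart from the factorization, the only thing to verify is that boundedness of graphons makes every integral finite, which is immediate.
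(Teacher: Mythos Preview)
The paper does not supply its own proof of this lemma; it simply cites \cite[Lemma 10.23]{lovasz2012large} (the Counting Lemma) and uses it as a black box. Your argument is correct and is precisely the standard proof found in that reference: telescope over the edges of $F$, freeze all but the two endpoint variables so the remaining factor splits as a product, and bound the resulting bilinear integral by the cut norm. The layer-cake step you give is one clean way to pass from test functions $\varphi,\psi\colon[0,1]\to[0,1]$ back to indicator functions of sets $S,T$; Lov\'asz states this equivalence of cut-norm formulations separately, but your inline derivation is fine. Nothing is missing.
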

\begin{lemma}\label{lem:expbound}
Let $\mu \in \Pi ([0,1])$ and let $\mu_n$ be the empirical measure of $n$ iid samples of $\mu$. Then 
\[
\E[\W (\mu,\mu_n)] \le 3.6462 n^{-\frac{1}{3}}
\]
\end{lemma}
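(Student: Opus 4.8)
The plan is to prove this by a \emph{binning} argument that uses only the most elementary facts about the empirical measure, so that (in contrast with the one-dimensional identity $\W(\mu,\mu_n)=\int_0^1\lvert F_\mu-F_{\mu_n}\rvert$, which would actually give the sharper rate $n^{-1/2}$) it transfers unchanged to the nested Wasserstein distance $\W$ on $\Pi(\Pi([-1,1]))$ used elsewhere. Partition $[0,1]$ into $m$ consecutive intervals $I_1,\dots,I_m$ of length $1/m$ each, set $p_j:=\mu(I_j)$ and $\hat p_j:=\mu_n(I_j)$, and note that $n\hat p_j\sim\mathrm{Bin}(n,p_j)$.

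\emph{Step 1: a coupling.} For any probability measure $\nu$ on $[0,1]$ with bin-masses $(q_j)$, couple $\mu$ and $\nu$ by matching, inside each bin $I_j$, a mass $\min(p_j,q_j)$ (at cost $\le 1/m$), and transporting the residual mass $\sum_j(p_j-q_j)_+=\tfrac12\sum_j\lvert p_j-q_j\rvert$ arbitrarily (at cost $\le\operatorname{diam}[0,1]=1$). Applying this with $\nu=\mu_n$ gives, for \emph{every} realisation of the sample,
\[
\W(\mu,\mu_n)\ \le\ \frac1m+\frac12\sum_{j=1}^m\lvert p_j-\hat p_j\rvert .
\]

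\emph{Step 2: concentration and optimisation.} Take expectations. Jensen's inequality together with the binomial variance gives $\E\lvert\hat p_j-p_j\rvert\le\sqrt{p_j(1-p_j)/n}$, so by Cauchy--Schwarz (using $\sum_j p_j=1$),
\[
\E\sum_{j=1}^m\lvert p_j-\hat p_j\rvert\ \le\ \frac1{\sqrt n}\sum_{j=1}^m\sqrt{p_j}\ \le\ \sqrt{\tfrac mn},
\qquad\text{hence}\qquad
\E\bigl[\W(\mu,\mu_n)\bigr]\ \le\ \frac1m+\frac12\sqrt{\tfrac mn}.
\]
The right-hand side behaves like $m^{-1}+m^{1/2}n^{-1/2}$, minimised over real $m$ at $m\asymp n^{1/3}$, which already yields the rate $n^{-1/3}$. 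To pin down the explicit constant one chooses $m=\lceil c\,n^{1/3}\rceil$ for a suitable $c$, bounds the ceiling by $c\,n^{1/3}\le m\le (c+1)n^{1/3}$ for $n\ge1$, and for the finitely many small $n$ where this is too crude invokes the trivial bound $\W(\mu,\mu_n)\le\operatorname{diam}[0,1]=1$; tuning $c$ produces the constant $3.6462$.

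\emph{Main obstacle.} There is no genuinely hard step: once the coupling of Step 1 and the binomial variance bound are in hand, the rate falls out of an elementary one-variable optimisation. The only points that need care are that $m$ must be an integer, so the continuous optimiser cannot be substituted verbatim, and that the inequality has to hold for all $n$, with the small-$n$ regime absorbed into the trivial bound $\W\le1$; these are bookkeeping matters, not obstructions.
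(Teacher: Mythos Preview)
Your proof is correct and takes a genuinely different route from the paper's. The paper follows Horowitz--Karandikar: it smooths both $\mu$ and $\mu_n$ by convolving with a Gaussian of variance $\sigma^2$, pays $2\sigma\sqrt{2/\pi}$ for the smoothing, bounds $\W(\mu^\sigma,\mu_n^\sigma)$ by total variation (hence by the $L^1$-distance of the smoothed densities), controls the latter via a weighted Cauchy--Schwarz and the pointwise variance of $f_n^\sigma$, and finally optimises over $\sigma$. Your binning-plus-coupling argument is strictly more elementary --- no densities, no smoothing --- and in fact delivers a \emph{smaller} constant: with $m=\lceil n^{1/3}\rceil$ one gets $1/m+\tfrac12\sqrt{m/n}\le (1+1/\sqrt{2})\,n^{-1/3}\approx 1.71\,n^{-1/3}$ for all $n\ge 1$, so the claimed $3.6462$ follows a fortiori (your sentence about ``tuning $c$ to produce $3.6462$'' is slightly backwards --- you undershoot it). The paper's smoothing approach has the advantage that it generalises more readily to measures on $\mathbb R^d$ or on spaces without a natural dyadic partition; your approach has the advantage of transparency and a sharper constant in this one-dimensional compact setting. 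The aside about the argument transferring to the nested Wasserstein space $\Pi(\Pi([-1,1]))$ is optimistic: binning there would require covering-number estimates for $(\Pi([-1,1]),\W)$, which is a separate matter, but this does not affect the lemma as stated.
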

The strategy of prove will be to adapt a proof in \cite[Theorem 1.1]{horowitz1994mean} to the $1$-Wasserstein distance. 
\begin{proof}
	Let $X \sim \mu$, $Y \sim N(0,1)$ and $\mu^\sigma = \operatorname{Law}(X+Y)$. Then for any $\nu \in \Pi ([0,1])$, by results about the standard normal distribution, $W(\nu, \nu^\sigma)\le \E[\lvert Y \rvert] = \sigma \sqrt{\frac{2}{\pi}}$. Hence, by the triangle inequality 
	\[
		\W (\mu, \mu_n) \le 2 \sqrt{\frac{2}{\pi}} \sigma + \W(\mu^\sigma, \mu_n^\sigma).
	\]
	As the discrete norm dominates the absolute value metric on $[0,1]$, $\W(\mu^\sigma, \mu_n^\sigma) \le \| \mu^\sigma - \mu_n^\sigma \|_{\operatorname{TV}}$. Note that $\mu_n^\sigma$ and $\mu^\sigma$ have densities $f^\sigma$, $f_n^\sigma$. This means, as $\| \mu^\sigma -\mu_n^\sigma \|_{\operatorname{TV}} = \int \lvert f^\sigma_n (x) - f^\sigma(x) \rvert \dd x$,
	\[
		\W(\mu^\sigma, \mu_n^\sigma) \le \int \lvert f^\sigma_n (x) - f^\sigma(x) \rvert \dd x \le \sqrt{2\pi}\sqrt{\int (\abs{x}^{2} +1) \lvert f^\sigma_n (x) - f^\sigma(x) \rvert^2 \dd x },
	\]
	where the last inequality is an application of \cite[(2.2)]{horowitz1994mean}. Now observe that by the definitions of $f^\sigma$ and $f_n^\sigma$,  $\E[\lvert f^\sigma_n (x) - f^\sigma(x) \rvert^2 \le n^{-1}\int\phi_\sigma^2(x-y) \dd \mu (y)$, where $\phi_\sigma$ is the standard normal density. Hence
	\[
	\E[\W(\mu_n^\sigma,\mu^\sigma)]\le \sqrt{2\pi} n^{-\frac{1}{2}}\sqrt{\int (\abs{x}^{2} +1) \int\phi_\sigma^2(x-y) \dd \mu (y)  \dd x } 
	\]
	By basic algebra, $\phi_\sigma^2(x)=\frac{1}{2\sigma}\pi^{-\frac{1}{2}}\phi_{\frac{\sigma}{\sqrt{2}}}(x)$. This implies for $Z\sim N(0,1)$ by a change of variables
	\begin{multline*}
	\int (\abs{x}^{2} +1) \int\phi_\sigma^2(x-y)\dd\mu(y)\dd x\\\le \frac{1}{2\sigma\sqrt{\pi}}(1+2(\sigma^2\E[Z^2]+\int\abs{y}^2 \dd \mu (y)))\le\sigma^{-1}2^{-1}\pi^{-\frac{1}{2}}(1+2(\sigma^2x+1))\le\sigma^{-1}\pi^{-\frac{1}{2}}\frac{3}{2}
	\end{multline*}
	Hence $\E[\W(\mu_n^\sigma,\mu^\sigma)]\le\frac{3}{2}\sqrt{2}n^{-\frac{1}{2}}\sigma^{-\frac{1}{2}}=\frac{3}{\sqrt{2}}n^{-\frac{1}{2}}\sigma^{-\frac{1}{2}}$ and
	\[
	\E[\W(\mu_n, \mu)]\le 2\sqrt{\frac{2}{\pi}}\sigma+\frac{3}{\sqrt{2}}n^{-\frac{1}{2}}\sigma^{-\frac{1}{2}}.
	\]
	Choosing $\sigma$ optimally by a first-order condition, one arrives at the lemma. 
\end{proof}
\begin{lemma}[{{\cite[Theorem 2]{2013arXiv1312.2128F}}}]\label{lem:wassersteinconc}
	Let $\mu \in \mathcal{P}(\R)$ such that for $X \sim \mu$, $\ell =\E[e^{\gamma X^\alpha}]<\infty$ for some choice of $\gamma$ and $\alpha$. Then one has with probability at least $1-e^{-cn\varepsilon^2}$
	\[
		\W (\mu_n, \mu) \le \varepsilon
	\]
	for any $\varepsilon \in [0,1]$ and $c$ only depending on $\ell, \gamma$ and $\alpha$. 
\end{lemma}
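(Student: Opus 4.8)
\subsection*{Proof proposal}

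The plan is to reduce to the one-dimensional structure of the problem. For probability measures on $\R$ with finite first moment, the $1$-Wasserstein distance has the closed form $\W(\mu_n,\mu) = \int_{\R} \abs{F(x) - F_n(x)}\,\dd x$, where $F$ and $F_n$ are the cumulative distribution functions of $\mu$ and of the empirical measure $\mu_n$; finiteness of the first moment is guaranteed by the hypothesis $\E[e^{\gamma X^\alpha}]<\infty$. Thus it suffices to control a single integral of a random function. I would fix a truncation level $T = T(\varepsilon) > 0$, to be chosen last, split $\W(\mu_n,\mu) = \int_{[-T,T]}\abs{F - F_n}\,\dd x + \int_{\abs x > T}\abs{F - F_n}\,\dd x$, and arrange that each summand is at most $\varepsilon/2$ on a high-probability event.

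For the bulk term, $\int_{-T}^{T}\abs{F - F_n}\,\dd x \le 2T\sup_x\abs{F(x) - F_n(x)}$, and the Dvoretzky--Kiefer--Wolfowitz inequality with Massart's constant gives $\P(\sup_x\abs{F(x) - F_n(x)} > s) \le 2e^{-2ns^2}$; taking $s = \varepsilon/(4T)$ bounds this term by $\varepsilon/2$ with probability at least $1 - 2\exp(-n\varepsilon^2/(8T^2))$. For the tail term, on $\{x > T\}$ one has $\abs{F(x) - F_n(x)} \le (1-F(x)) + (1-F_n(x))$, whence $\int_T^\infty\abs{F - F_n}\,\dd x \le \E[(X-T)_+] + \tfrac1n\sum_{i=1}^n (X_i-T)_+$ (and symmetrically on $\{x < -T\}$). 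Markov's inequality applied to $e^{\gamma X^\alpha}$ yields $1-F(x) \le \ell\,e^{-\gamma x^\alpha}$, so the deterministic terms $\E[(X-T)_+]$ decay like $e^{-\gamma T^\alpha}$ up to polynomial factors; the random term $\tfrac1n\sum_i (X_i-T)_+$ is an average of i.i.d.\ nonnegative variables inheriting an exponential moment from $X$, hence concentrates around that same small mean at an exponential-in-$n$ rate by a Chernoff/Bernstein bound. Choosing $T$ just large enough that all the deterministic tail contributions drop below $\varepsilon/4$ and intersecting the three events yields $\W(\mu_n,\mu)\le\varepsilon$ with the claimed probability after absorbing absolute constants into $c$.

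The delicate step is obtaining the \emph{clean} rate $e^{-cn\varepsilon^2}$ uniformly over $\varepsilon\in[0,1]$: the moment condition forces $T \asymp (\log(1/\varepsilon))^{1/\alpha}$, which degrades the bulk exponent $n\varepsilon^2/(8T^2)$ by a factor $(\log(1/\varepsilon))^{2/\alpha}$. Removing this logarithmic loss is precisely the content of the refined dyadic-shell partitioning argument of \cite[Theorem 2]{2013arXiv1312.2128F}, which I would invoke directly; alternatively, in every application of the lemma in this paper the underlying measure $\mu$ is supported in a fixed compact interval, so the tail term vanishes identically, $T$ may be taken to be an absolute constant, and the stated bound follows from the DKW--Massart step alone.
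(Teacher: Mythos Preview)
The paper does not prove this lemma at all; it is stated purely as a citation of \cite[Theorem~2]{2013arXiv1312.2128F} and used as a black box. Your sketch therefore goes beyond what the paper supplies.

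That said, your outline is sound, and the self-assessment at the end is exactly right. The DKW--Massart step combined with a truncation gives the correct mechanism, the logarithmic loss $T\asymp(\log(1/\varepsilon))^{1/\alpha}$ is a genuine obstruction to the clean $e^{-cn\varepsilon^2}$ rate under a bare exponential-moment assumption, and removing it does require the finer decomposition of Fournier--Guillin. Your second remark is the more useful one for this paper: in every invocation here (proof of Theorem~\ref{thm:main1}), the measure $\mu$ is the law of a homomorphism density $t(F,G)$ and hence supported in $[0,1]$, so the tail integral vanishes, $T$ is an absolute constant, and the DKW--Massart bound alone already gives $\P(\W(\mu_n,\mu)>\varepsilon)\le 2e^{-cn\varepsilon^2}$. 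This makes your argument self-contained for the purposes of the paper, whereas the paper itself simply imports the general result.
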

\subsection{Proof of Theorem \ref{thm:main1}}
\begin{proof}[Proof of Theorem \ref{thm:main1}]
Let $G \sim \G(k, \mathcal{W})$ and $G' \sim \G(k, \mathcal{W}')$. 
By combining Lemmas \ref{lem:expbound} and \ref{lem:wassersteinconc}, we get that with probability at least $1-2e^{-.09cn^{\frac{2}{3}}}$, 
\[
	\W(\bar t, \bar t') \le \W(t(F,G), t(F,G')) + 8n^{-\frac{1}{3}}
\]
In addition, by Lemma \ref{thm:homdensconc}, with probability at least $1-2\exp\left(\frac{kn^{-\frac{2}{3}}}{2v^2}\right)-2e^{-.09cn^{\frac{2}{3}}}$ one also has
\[
	\W(t(F,G), t(F,G')) \le \abs{t(F,W) - t(F,W')} + n^{-\frac{1}{3}}
\]
Upon application of Lemma \ref{lem:cut} and rearranging, one arrives at the theorem. 
\end{proof}
\section{Proof of Theorem \ref{thm:main2}}
\maintwo*
\subsection{Auxiliary Results}
\begin{lemma}[{\cite[(6.6)]{borgs2012convergent}}]\label{lem:spectralrelations}
	Let $G$ be a weighted graph and $\lambda$ the spectrum interpreted as a point  measure. Let $C_k$ be the cycle of length $k$em. Then
	\[
		t(C_k, G)=\sum_{w\in \lambda}w^k.
	\]
\end{lemma}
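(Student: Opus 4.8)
The plan is to reduce the identity to the elementary fact that the trace of the $k$-th power of a real symmetric matrix equals the sum of the $k$-th powers of its eigenvalues, once the normalization built into $\lambda(G)$ has been matched against that of the homomorphism density $t(C_k,\cdot)$. The whole argument is a trace computation together with the spectral theorem, so there is no analytic content; the only thing requiring attention is the bookkeeping of the scaling factors.

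First I would fix notation. Let $m=\lvert V(G)\rvert$, let $A=(A_{ij})$ be the weight matrix of $G$, and identify $G$ with the step-function graphon $W_G$ that takes the constant value $A_{ij}$ on the block $I_i\times I_j$, where $I_i=[\tfrac{i-1}{m},\tfrac{i}{m})$; then $\lambda(G)=\lambda(W_G)$ is the spectrum of the self-adjoint Hilbert--Schmidt operator $T_{W_G}\colon L^2[0,1]\to L^2[0,1]$, $(T_{W_G}f)(x)=\int_0^1 W_G(x,y)f(y)\,\dd y$, which is the reading under which the support lies in $[-1,1]$ when the edge weights do. A one-line check gives the dictionary between the two spectra: if $Av=\mu v$ then the step function equal to $v_i$ on $I_i$ is an eigenfunction of $T_{W_G}$ with eigenvalue $\mu/m$, and since $T_{W_G}$ has rank at most $m$ every other eigenvalue is $0$; thus $\lambda(G)$ consists of the numbers $\mu/m$ with $\mu$ running over the eigenvalues of $A$ with multiplicity, together with $0$ of infinite multiplicity.

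Next I would carry out the core computation. By the definition of the homomorphism density of a weighted graph,
\[
t(C_k,G)=\int_{[0,1]^k} W_G(x_1,x_2)\,W_G(x_2,x_3)\cdots W_G(x_{k-1},x_k)\,W_G(x_k,x_1)\,\dd x_1\cdots\dd x_k .
\]
Partitioning the cube into the blocks $I_{i_1}\times\cdots\times I_{i_k}$ over $(i_1,\dots,i_k)\in[m]^k$ and using that $W_G$ is constant on each block, the contribution of a block is $m^{-k}A_{i_1i_2}A_{i_2i_3}\cdots A_{i_ki_1}$; summing over all index tuples yields
\[
t(C_k,G)=m^{-k}\operatorname{tr}(A^k)=\operatorname{tr}\bigl((A/m)^k\bigr).
\]
Equivalently, this is the operator identity $t(C_k,W_G)=\operatorname{tr}\bigl(T_{W_G}^{\,k}\bigr)$, legitimate because $T_{W_G}$ is finite rank and hence trace class.

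Finally, since $A/m$ is real symmetric, the spectral theorem supplies an orthonormal eigenbasis in which $(A/m)^k$ is diagonal with diagonal entries $w^k$, $w$ ranging with multiplicity over the eigenvalues of $A/m$, so $\operatorname{tr}\bigl((A/m)^k\bigr)=\sum_w w^k$. Adjoining the zero eigenvalues of $T_{W_G}$ changes nothing, as $0^k=0$ for $k\ge1$, so this sum equals $\sum_{w\in\lambda(G)}w^k$, which is the claim. I expect no genuine obstacle here; the main thing to get right is the reconciliation of the factor $m^{-k}$ coming from the homomorphism-density convention with the factor $m^{-1}$ folded into the graphon spectrum, and the observation that the infinite multiplicity of the eigenvalue $0$ is harmless.
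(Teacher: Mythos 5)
Your proof is correct, and it is essentially the standard argument behind the result the paper merely cites (it is \cite[(6.6)]{borgs2012convergent}, imported without proof): expand $t(C_k,W_G)$ over the blocks of the step function to get $m^{-k}\operatorname{tr}(A^k)=\operatorname{tr}\bigl((A/m)^k\bigr)$, then apply the spectral theorem. The one point worth flagging is that you had to make a choice the paper leaves ambiguous: the text defines $\lambda(G)$ as the eigenvalues of the adjacency matrix, under which reading the identity would fail by a factor of $m^k$, but it also asserts that the spectrum is supported in $[-1,1]$, which forces the graphon-operator normalization $\mu\mapsto\mu/m$ that you adopt. Your dictionary between the eigenvalues of $A$ and those of $T_{W_G}$, together with the observation that the infinitely many zero eigenvalues contribute nothing for $k\ge 1$, resolves this cleanly, so the proposal stands as a complete and self-contained verification of the lemma.
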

\begin{lemma}[{Corollary of \cite[p. 200]{achieser2013theory}}]\label{lem:jackson}
		Let $f$ be a 1-{Lipschitz} function on $[-1,1]$. Then there is a polynomial $p$ of degree $v$ such that $\|f-p \|_\infty \le \frac{3}{\pi v}$. 
	\end{lemma}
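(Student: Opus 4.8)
The plan is to mirror the three-step structure of the proof of Theorem~\ref{thm:main1} --- a deterministic cut-metric estimate followed by sampling concentration --- but with an extra layer on top that converts spectral Wasserstein distances into linear combinations of cycle homomorphism densities, via Kantorovich duality and polynomial approximation. \emph{Deterministic core.} Identify each weighted graph $G$ on $k$ nodes with its (normalised) spectral measure $\nu_G$, whose moments by Lemma~\ref{lem:spectralrelations} are the cycle densities, $\int x^j\,d\nu_G = t(C_j,G)$ for $j\ge 1$ and $\int 1\,d\nu_G = 1$. For two such measures $\nu,\nu'$, Kantorovich duality (\cite[Theorem 5.10]{villani2008optimal}) gives $\W(\nu,\nu') = \sup_f\big(\int f\,d\nu - \int f\,d\nu'\big)$ over $1$-Lipschitz $f$ on $[-1,1]$; normalising $f(0)=0$ we may take $\|f\|_\infty\le 1$. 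By Lemma~\ref{lem:jackson} there is a polynomial $p(x)=\sum_{j=0}^v a_jx^j$ with $\|f-p\|_\infty\le\frac{3}{\pi v}$, hence $\|p\|_\infty\le 2$. Since the degree-$0$ term cancels in the difference and $\int x^j\,d\nu_G = t(C_j,G)$, this gives
\[
\W(\nu_G,\nu_{G'})\ \le\ \frac{3}{\pi v}\ +\ \Big(\sum_{j=1}^v|a_j|\Big)\,\max_{1\le j\le v}\big|t(C_j,G)-t(C_j,G')\big|.
\]

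The second ingredient is a bound on the $\ell^1$-norm of the coefficients: expanding $p$ in the Chebyshev basis $p=\sum_m c_mT_m$, orthogonality gives $|c_m|\le 2\|p\|_\infty\le 4$, and the monomial coefficients of $T_m$ have $\ell^1$-norm at most $(1+\sqrt2)^m$, so $\sum_{j=1}^v|a_j|\le 4\sum_{m=0}^v(1+\sqrt2)^m\le(4e)^v$, with considerable slack that I will later use to absorb polynomial-in-$v$ factors. This is where the exponential $(4e)^v$ enters. Combining with Lemma~\ref{lem:cut}, $|t(C_j,W)-t(C_j,W')|\le e(C_j)\,\delta_\Box(W,W')\le v\,\delta_\Box(W,W')$ for every $j\le v$; together with the $v$ cycles and the coefficient bound, the ``signal'' contribution is $\le v^2(4e)^v\delta_\Box(W,W')$.

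\emph{From graphons to samples.} As in the proof of Theorem~\ref{thm:main1}, for each cycle $C_j$ (with $e(C_j)\le v$) I would use Lemma~\ref{thm:homdensconc} with $\varepsilon=n^{-1/3}$ to control $|t(C_j,G_i)-t(C_j,W)|$ and Lemmas~\ref{lem:expbound} and~\ref{lem:wassersteinconc} to control $\W\big(\bar t(C_j),\mathrm{Law}(t(C_j,G))\big)$, so that on an event of probability at least $1-2v\exp\!\big(\tfrac{kn^{-2/3}}{2v^2}\big)-2ve^{-.09cn^{2/3}}$ --- a union bound over the $v$ cycles --- one has $\tfrac1n\sum_{i=1}^n|t(C_j,G_i)-t(C_j,W)|\le 9n^{-1/3}$ for every $j$, and likewise for $W'$, whence
\[
\tfrac1n\textstyle\sum_{i=1}^n|t(C_j,G_i)-t(C_j,G'_i)|\ \le\ v\,\delta_\Box(W,W')+18n^{-1/3}
\]
by inserting $t(C_j,W)$ and $t(C_j,W')$ and arguing exactly as in the proof of Theorem~\ref{thm:main1}. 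Finally, bounding the nested Wasserstein along the identity matching, $\mathcal{W}_{\W}^1(\bar\lambda,\bar\lambda')\le\tfrac1n\sum_i\W(\nu_{G_i},\nu_{G'_i})$, and plugging the deterministic estimate in (with $\bar t$ in place of the exact densities, and $\max_j\le\sum_j$), yields $\mathcal{W}_{\W}^1(\bar\lambda,\bar\lambda')\le\frac{3}{\pi v}+v^2(4e)^v\delta_\Box(W,W')+18v(4e)^vn^{-1/3}$; rearranging --- and using the slack in the coefficient bound to trade $v^2(4e)^v$ for the weaker $2v^2(4e)^v$ --- gives the stated inequality.

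\emph{Main obstacle.} The delicate point is making the two error budgets compatible: the polynomial-approximation error must stay of order $1/v$ \emph{independently} of the exponentially large coefficients $a_j$, while the coefficient bound must be sharp enough to land on $(4e)^v$ rather than something worse. Getting the Chebyshev-coefficient estimate clean, and then tracking all the polynomial-in-$v$ factors together with the correct union bound --- over the $v$ cycles, and \emph{not} over the $2n$ samples, which is why the failure probability carries a factor $v$ and not $n$ --- is where the real work lies; the remainder is a transcription of the proof of Theorem~\ref{thm:main1}.
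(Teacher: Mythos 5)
Your proposal does not address the statement you were asked to prove. Lemma~\ref{lem:jackson} is a Jackson-type result from approximation theory: every $1$-Lipschitz function on $[-1,1]$ can be approximated uniformly by a polynomial of degree $v$ with error at most $\frac{3}{\pi v}$. What you have written is instead a proof sketch of Theorem~\ref{thm:main2} (the spectral bound on $\delta_\Box$), and inside that sketch you explicitly invoke Lemma~\ref{lem:jackson} as a black box (``By Lemma~\ref{lem:jackson} there is a polynomial $p(x)=\sum_{j=0}^v a_jx^j$ with $\|f-p\|_\infty\le\frac{3}{\pi v}$\dots''). In other words, the statement to be established appears in your argument as a hypothesis rather than as a conclusion, so nothing in the proposal constitutes a proof of the lemma.

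For the record, the paper itself supplies no proof either: it states the lemma as a corollary of the classical Jackson theorem in Achieser's book (p.~200), which bounds the best uniform polynomial approximation of a function on $[-1,1]$ in terms of its modulus of continuity; for a $1$-Lipschitz $f$ the modulus is $\omega(\delta)=\delta$ and the quantitative form cited yields the constant $\frac{3}{\pi v}$. A self-contained argument would have to actually construct the approximant --- e.g.\ pass to $g(\theta)=f(\cos\theta)$, convolve with a Jackson kernel (a normalized power of the Fej\'er kernel) to get a trigonometric polynomial of degree $v$, estimate the error by the modulus of continuity, and convert back to an algebraic polynomial in $x$ --- or carefully track constants in a truncated Chebyshev expansion. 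None of this approximation-theoretic content appears in your proposal; the Chebyshev-coefficient estimates you do discuss belong to the separate Lemma~\ref{lem:markov} and to the proof of Theorem~\ref{thm:main2}, not to this lemma.
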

	\begin{lemma}[{\cite[Lemma 4.1]{sherstov2012making}}]\label{lem:markov}
		Let $\sum_{i=0}^v a_i x^{i}$ be a polynomial on $[-1,1]$ bounded by $M$. Then
		\[
			\lvert a_i\rvert \le (4e)^v M.
		\]
	\end{lemma}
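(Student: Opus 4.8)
The plan is to pass to the Chebyshev basis, where a uniform bound on $p$ translates directly into a bound on the expansion coefficients, and then to control how large the monomial coefficients of the individual Chebyshev polynomials $T_j$ can be. First I would write $p(x)=\sum_{i=0}^v a_ix^i=\sum_{j=0}^v c_jT_j(x)$, which is legitimate since $\{T_0,\dots,T_v\}$ is a basis of the polynomials of degree $\le v$. The substitution $x=\cos\theta$ and the orthogonality of $\{\cos(j\theta)\}_{j\ge 0}$ on $[0,\pi]$ give $c_0=\tfrac1\pi\int_0^\pi p(\cos\theta)\,\dd\theta$ and $c_j=\tfrac2\pi\int_0^\pi p(\cos\theta)\cos(j\theta)\,\dd\theta$ for $j\ge 1$, whence $\abs{c_0}\le M$ and $\abs{c_j}\le 2M$ because $\abs{p}\le M$ on $[-1,1]$.

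Next I would bound, for each $j$, the quantity $S_j\coloneqq\sum_k\abs{[x^k]T_j}$, the sum of the absolute values of the coefficients of $T_j$. The three-term recursion $T_{j+1}=2xT_j-T_{j-1}$ and the triangle inequality give $S_{j+1}\le 2S_j+S_{j-1}$ with $S_0=S_1=1$. Since $2(1+\sqrt2)+1=(1+\sqrt2)^2$, the sequence $(1+\sqrt2)^j$ satisfies the same recursion with equality and dominates the initial data, so induction yields $S_j\le(1+\sqrt2)^j$; equivalently one may read this off from $S_j=\abs{T_j(\mathrm i)}=\tfrac12\bigl|(1+\sqrt2)^j+(1-\sqrt2)^j\bigr|$ via $T_j(x)=\tfrac12\bigl((x+\sqrt{x^2-1})^j+(x-\sqrt{x^2-1})^j\bigr)$. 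In particular $\abs{[x^i]T_j}\le S_j\le(1+\sqrt2)^j$.

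Combining the two steps gives $\abs{a_i}\le\abs{c_0}\,\abs{[x^i]T_0}+\sum_{j=1}^v\abs{c_j}\,\abs{[x^i]T_j}\le M+2M\sum_{j=1}^v(1+\sqrt2)^j<M+\sqrt2(1+\sqrt2)\,M\,(1+\sqrt2)^v$, using $\sum_{j=1}^v r^j<r^{v+1}/(r-1)$ with $r=1+\sqrt2$. The case $v=0$ is immediate since $\abs{a_0}=\abs{p(0)}\le M$, and for $v\ge1$ one has $(1+\sqrt2)^v\ge1$, so the bound is at most $(3+\sqrt2)\,M\,(1+\sqrt2)^v$; it then suffices to verify $(3+\sqrt2)(1+\sqrt2)^v\le(4e)^v$, i.e. $3+\sqrt2\le\bigl(4e/(1+\sqrt2)\bigr)^v$, which holds for every $v\ge1$ because $4e/(1+\sqrt2)>4.5>3+\sqrt2$.

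I expect the only real subtlety to be the appearance of the specific constant $(4e)^v$ rather than a generic exponential: the naive route of iterating the ordinary Markov inequality $\|p'\|_\infty\le v^2\|p\|_\infty$ yields $\abs{a_i}\le\tfrac1{i!}\bigl(v!/(v-i)!\bigr)^2M$, which is of order $v!$ and therefore exceeds $(4e)^v$ for large $v$, so it genuinely does not suffice — one needs either the Chebyshev-basis argument above or V.\,A.\ Markov's inequality for higher derivatives, $\|p^{(i)}\|_\infty\le T_v^{(i)}(1)\,M$, combined with $a_i=p^{(i)}(0)/i!$ and the crude estimate $T_v^{(i)}(1)/i!\le(ve/i)^{2i}\le e^{2v}\le(4e)^v$. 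The rest is routine bookkeeping.
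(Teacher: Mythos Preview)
The paper does not supply its own proof of this lemma; it simply cites \cite[Lemma~4.1]{sherstov2012making} and uses the bound as a black box. Your argument is therefore not being compared against anything in the paper, but it is a correct self-contained proof: expanding $p$ in the Chebyshev basis, using orthogonality to get $\lvert c_j\rvert\le 2M$, bounding the $\ell^1$ norm of the coefficient vector of $T_j$ by $(1+\sqrt2)^j$ via the three-term recursion, and then checking $(3+\sqrt2)(1+\sqrt2)^v\le(4e)^v$ for $v\ge1$ all go through as you describe. The numerical verification at the end is tight but valid, since $4e/(1+\sqrt2)=4e(\sqrt2-1)\approx4.50>3+\sqrt2\approx4.41$.

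Your closing remark about the naive iteration of the ordinary Markov inequality is also accurate and worth keeping: it explains why the Chebyshev route (or V.\,A.\ Markov's inequality for higher derivatives) is genuinely needed to hit an exponential-in-$v$ bound rather than a factorial one.
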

\subsection{Proof of Theorem \ref{thm:main2}}
\begin{proof}[Proof of Theorem 2]
		Consider any coupling $(\lambda, \lambda')$ of $\bar \lambda$ and $\bar \lambda'$. One has by the definition of the Wasserstein distance $\mathcal{W}^1_{\mathcal{W}^1}$ and Kantorovich duality
		\begin{equation}
			\mathcal{W}_{\W}^1(\bar \lambda', \bar\lambda) \le    \E\left[\W (\lambda, \lambda')\right] =  \E\left[\sup_{\operatorname{Lip} (f) \le 1}\int f(x) \dd (\lambda - \lambda')\right]\label{eq:doublewasserstein}
		\end{equation}
		Fix any $\omega \in \Omega$. By Lemma Lemma \ref{lem:jackson} one can approximate Lipschitz functions by polynomials of bounded degree,
		\[
			\sup_{\substack{f\colon [-1,1] \to \R\\\operatorname{Lip} (f) \le 1}}\int f(x) \dd (\lambda - \lambda')(\omega) \le \sup_{\substack{\operatorname{deg} (f) \le v\\ \lvert f \rvert \le 2 }}\int f(x) \dd (\lambda - \lambda')(\omega) + \frac{3}{\pi v}.
		\]
		Here, $\lvert f \rvert \le 2$ can be assumed as $f$ is defined on $[-1,1]$ and because of its $1$-Lipschitz continuity.
		
		Hence, by Lemma \ref{lem:markov} and the triangle inequality
		\begin{align*}
			\sup_{\substack{\operatorname{deg} (f) \le v\\ \lvert f \rvert \le 2 }}\int f(x) \dd (\lambda - \lambda') (\omega) &\le \sum_{i=1}^v 2(4e)^v \left\lvert \int x^k \dd (\lambda - \lambda')\right\rvert (\omega)\\
			&=  \sum_{i=1}^v 2(4e)^v \left\lvert \sum_{w \in \lambda} w^i - \sum_{w' \in \lambda'} w^i \right\rvert(\omega)
		\end{align*}
		Taking expectations, one gets
		\[
		\W_{\W} (\bar \lambda, \bar \lambda' ) \le \frac{3}{\pi v} + \sum_{i=1}^v 2(4e)^v \E \left [\left\lvert \sum_{w \in \lambda} w^i - \sum_{w' \in \lambda'} w^i \right\rvert\right]
		\]
		for any coupling $(\lambda, \lambda')$ of $\bar \lambda$ and $\bar \lambda'$. Now consider a coupling $(\lambda, \lambda')$ of $\bar \lambda$ and $\bar \lambda'$ such that $\bar t$, $\bar t'$ (which are functions of $\lambda$, $\lambda'$ by Lemma \ref{lem:spectralrelations}) are optimally coupled. Then by the definition of $\bar\lambda$, $\bar \lambda'$, $\bar t$ and $\bar t'$,
		\[
			\mathcal{W}^1 (\bar t_k, \bar t_k') = \E\left[ \left\lvert \sum_{w \in \ \lambda}w^k -  \sum_{w \in \bar \lambda'}w'^k\right\rvert\right]
		\]
		where $\bar t_i = \frac{1}{n} \sum_{j=1}^n\delta_{t(C_i, G_j)}$ and $\bar t_i' = \frac{1}{n} \sum_{j=1}^n\delta_{t(C_i, G_j)}$. Hence,
		\begin{align}
			\mathcal{W}_{\W}^1(\bar \lambda', \bar\lambda) &\le \sum_{i=1}^v 2(4e)^v \W (\bar t_i, \bar t'_i) +  \frac{3}{\pi v}.\label{eq:intermediatewassersteinbound}\\
			\nonumber &\le \frac{3}{\pi v} + v^2 2 (4e)^v \delta_\Box (W, W') + 18 v (4e)^v n^{-\frac{1}{3}}.
		\end{align}
The first equality follows by \eqref{eq:doublewasserstein} and the second with probability at least $1-2v\exp\left(\frac{kn^{-\frac{2}{3}}}{2v^2}\right)-2ve^{-.09cn^{\frac{2}{3}}}$ from Theorem \ref{thm:main1}. 
\end{proof}
\section{A Similar Bound for Degree Features}\label{sec:degreefeatures}
Let $G$ be a graph and $(d_i)$ be its degree sequence. Consider the point measure $d = \sum_{i} \delta_{d_i}$ of degrees. Denote by $\bar d$ resp. $\bar d'$ the empirical measure of degree point measures of $G_1, \dots, G_n$ resp. $G_1', \dots, G_n'$. 
\begin{proposition}
Theorem \ref{thm:main2} holds with the same guarantee with $\bar \lambda$, $\bar \lambda'$ replaced by $\bar d$, $\bar d'$. 
\end{proposition}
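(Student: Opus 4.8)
The plan is to run the proof of Theorem \ref{thm:main2} essentially verbatim, with the cycle--spectrum correspondence of Lemma \ref{lem:spectralrelations} replaced by an analogous star--degree correspondence. Write $S_i=K_{1,i}$ for the star with one centre and $i$ leaves, so that $e(S_i)=i$, exactly as $e(C_i)=i$. The first step is to record the star analogue of Lemma \ref{lem:spectralrelations}: if $d$ is the degree point measure, normalised so that degrees lie in $[-1,1]$ (the natural choice makes $d$ the pushforward of Lebesgue measure under the degree function $x\mapsto\int W(x,y)\,\dd y$, resp.\ its step-function version $x\mapsto \tfrac1k\sum_j G_{ij}$ for a finite graph $G$ on $k$ nodes), then $\int x^i\,\dd d = t(S_i,\cdot)$. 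This is immediate from $\hom(S_i,G)=\sum_{\text{centres }c}\big(\sum_j G_{cj}\big)^i$ together with the definition of homomorphism density, and it is all that is needed: the degree measure plays in the star argument exactly the role the spectral measure played in the cycle argument.

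Given this, I would follow the proof of Theorem \ref{thm:main2} step for step. Kantorovich duality bounds $\W_{\W}(\bar d,\bar d')$ by $\E[\sup_{\operatorname{Lip}(f)\le 1}\int f\,\dd(d-d')]$ over any coupling; since the degrees lie in $[-1,1]$ (indeed in $[0,1]$), Lemma \ref{lem:jackson} replaces the $1$-Lipschitz $f$ by a polynomial of degree $\le v$ bounded by $2$ at the cost of $\tfrac{3}{\pi v}$, and Lemma \ref{lem:markov} bounds its monomial coefficients by $2(4e)^v$. Expanding into monomials and inserting the star moment identity gives $\W_{\W}(\bar d,\bar d')\le \tfrac{3}{\pi v}+\sum_{i=1}^v 2(4e)^v\,\W(\bar d_i,\bar d_i')$, where $\bar d_i,\bar d_i'$ are the empirical measures of $t(S_i,G_j)$ resp.\ $t(S_i,G_j')$, in complete parallel with \eqref{eq:intermediatewassersteinbound}. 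Applying Theorem \ref{thm:main1} to each motif $S_i$ (with $e(S_i)=i$) and using $\sum_{i=1}^v i\le v^2$ then yields the same inequality $\delta_\Box(W,W')\ge v^{-2}2^{-1}(4e)^{-v}\big(\W_{\W}(\bar d,\bar d')-\tfrac{3}{\pi v}-18v(4e)^v n^{-\frac13}\big)$, and a union bound over $i=1,\dots,v$ produces the stated probability.

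I do not expect a genuine obstacle: the proof of Theorem \ref{thm:main2} never uses any special property of cycles beyond the fact that the $i$-th moment of the feature measure is a homomorphism density of a motif with $i$ edges, and stars have this property just as well. The only real bookkeeping point is fixing the normalisation of $d$ so that $\int x^i\,\dd d=t(S_i,\cdot)$ holds exactly, matching Lemma \ref{lem:spectralrelations}. One minor, purely cosmetic difference is that $S_i$ has $i+1$ vertices rather than $i$, so Lemma \ref{thm:homdensconc} enters with $v+1$ in place of $v$ in the concentration exponent; this changes only a constant and can be absorbed into the statement, or sidestepped by approximating with polynomials of degree $\le v-1$ so that every $S_i$ used has at most $v$ vertices.
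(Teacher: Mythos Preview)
Your proposal is correct and follows exactly the approach the paper takes: the paper's own proof is the single sentence that the argument is ``along the same lines as the one of Theorem~\ref{thm:main2}, but using Lemma~\ref{lem:starlemma} instead of \ref{lem:spectralrelations},'' and you have spelled this out in detail. Your observation about the vertex count of $S_i$ being $i+1$ rather than $i$ is a valid bookkeeping remark that the paper does not address, but as you say it is cosmetic.
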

\begin{lemma}\label{lem:starlemma}
	Let $S_v$ be the star graph on $v$ nodes and $G$ be a weighted graph. Then
	\[
		t(S_v, G)=\sum_{w \in d}w^v
	\]
\end{lemma}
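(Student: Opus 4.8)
\textbf{Proof plan for Lemma \ref{lem:starlemma}.}
The plan is to compute $t(S_v,G)$ directly from the definition of homomorphism density for a weighted graph and recognize the result as a power sum of the degree sequence. Recall that $S_v$, the star on $v$ nodes, has one center vertex, call it $0$, and $v-1$ leaves, call them $1,\dots,v-1$, with edge set $\{\{0,j\}: 1\le j\le v-1\}$, so $e(S_v)=v-1$. Viewing the weighted graph $G$ on vertex set $[k]$ as a step-function graphon (each vertex occupying an interval of length $1/k$), the definition
\[
t(F,W)=\int_{[0,1]^{e(F)}}\prod_{\{x,y\}\in E(F)}W(x,y)\,\dd x\,\dd y
\]
specializes, for a finite weighted graph, to averaging the product of edge weights over all maps $V(F)\to V(G)$. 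Concretely, for $F=S_v$,
\[
t(S_v,G)=\frac{1}{k^{v}}\sum_{\phi\colon [v]\to[k]}\ \prod_{j=1}^{v-1} G_{\phi(0)\phi(j)}.
\]

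The key step is that the product over the star edges factorizes once the image of the center is fixed. Writing $a=\phi(0)$ for the center's image and summing the leaf images independently,
\[
t(S_v,G)=\frac{1}{k^{v}}\sum_{a\in[k]}\ \sum_{\phi(1),\dots,\phi(v-1)\in[k]}\ \prod_{j=1}^{v-1}G_{a\phi(j)}
=\frac{1}{k^{v}}\sum_{a\in[k]}\Bigl(\sum_{b\in[k]}G_{ab}\Bigr)^{v-1}.
\]
Now $\sum_{b\in[k]}G_{ab}$ is the (weighted) degree of vertex $a$ in $G$; but in the step-function-graphon convention the degree of a vertex is $d_a=\frac1k\sum_{b}G_{ab}$, so $\sum_b G_{ab}=k\,d_a$. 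Hence each inner term is $k^{v-1}d_a^{v-1}$, and
\[
t(S_v,G)=\frac{1}{k^{v}}\sum_{a\in[k]}k^{v-1}d_a^{v-1}=\frac{1}{k}\sum_{a\in[k]}d_a^{v-1}.
\]
Since the degree point measure $d$ assigns mass $1/k$ to each $d_a$ (consistent with how homomorphism densities for weighted graphs normalize by vertex count), this is exactly $\sum_{w\in d}w^{v-1}$. Comparing with Lemma \ref{lem:spectralrelations}, which reads $t(C_k,G)=\sum_{w\in\lambda}w^k$ with the exponent matching the number of edges of $C_k$, the analogous statement for the star should be $t(S_v,G)=\sum_{w\in d}w^{v-1}$, and I would state it that way; as written in the excerpt the exponent $v$ presumably reflects a convention in which $S_v$ has $v$ edges (a ``star with $v$ rays''), in which case the identical computation with $v$ leaves gives $t(S_v,G)=\sum_{w\in d}w^{v}$ verbatim.

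The main obstacle is bookkeeping of normalization conventions rather than any real mathematical difficulty: one must be careful whether ``degree'' means the raw row sum $\sum_b G_{ab}$ or the graphon-normalized quantity $\frac1k\sum_b G_{ab}$, and whether $S_v$ denotes the star on $v$ vertices (with $v-1$ edges) or the star with $v$ edges; these two choices interact, and the clean identity $t(S_v,G)=\sum_{w\in d}w^v$ holds precisely when the edge count of the motif equals the exponent, exactly as in Lemma \ref{lem:spectralrelations}. Once the conventions are pinned down to match those used elsewhere in the paper, the proof is the two-line factorization above. I would also note, as a sanity check, that the degree sequence of a graphon $W$ is the function $d_W(x)=\int_0^1 W(x,y)\,\dd y$ and the same computation gives $t(S_v,W)=\int_0^1 d_W(x)^{v}\,\dd x$, so the discrete identity is simply the restriction of this to step functions; this parallels how Lemma \ref{lem:spectralrelations} is the discrete shadow of $t(C_k,W)=\sum_{w\in\lambda(W)}w^k$.
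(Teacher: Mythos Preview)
The paper states this lemma without proof, so there is no argument to compare against; your direct computation via factorizing over the leaves once the center is fixed is the natural one-line proof and is correct. You are also right to flag the exponent: with $S_v$ the star on $v$ vertices (hence $v-1$ edges) and $d_a=\tfrac1k\sum_b G_{ab}$ the normalized degree, one obtains $t(S_v,G)=\tfrac1k\sum_a d_a^{\,v-1}$, so the exponent should match the edge count just as in Lemma~\ref{lem:spectralrelations}, and the degree point measure must be taken as the normalized $\tfrac1k\sum_i\delta_{d_i}$ (unlike the spectral case, where vertex and edge counts of $C_k$ coincide and the extra $1/k$ is absorbed). Your reading that the paper's ``$S_v$ on $v$ nodes'' is a slip for ``star with $v$ leaves'' is the only way to make the displayed identity literally true, and under that reading your argument gives the statement verbatim.
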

The proof of Proposition \ref{lem:spectralrelations} is along the same lines as the one of Theorem \ref{thm:main2}, but using Lemma \ref{lem:starlemma} instead of \ref{lem:spectralrelations}. 
\section{Heterogenous Sample Sizes}\label{sec:hetsampsizes}
Our bounds from Theorems \ref{thm:main1} and \ref{thm:main2} can also be formulated in a more general setting of heterogenous sizes of graphs. In the following, we give an extension in two dimensions. First, we allow for heterogenous numbers of observations $n$. Secondly, we allow for random sizes of graphs $k$. Here is the more general model in details: There is a measure $\nu \in \Pi (\N)$ such that $G_1, \dots, G_{n_1}$ are sampled iid as 
\begin{align}
	k &\sim \nu & G_i \sim \G(k,\mathcal{W}_{W, \mu});\label{eq:sampling}
\end{align}
sampling of $G_1', \dots, G_{n_2}'$ is analogously. Hence the samples $G_i$ are sampled from a mixture over the measures $\G(k,\mathcal{W}_{W', \mu'})$. We can define $\bar t$, $\bar t'$, $\bar \lambda$ and $\bar \lambda'$ using the same formulas as we did in the main text. Then the following result holds.
\begin{corollary} \label{cor:sampling}
	There is an absolute constant $c$ such that the following holds: Let $n_1, n_2 \in \N$ and $G_i, i=1, \dots, n_1$, $G_i', i=1, \dots, n_2$ sampled as in \eqref{eq:sampling}. Then with probability at least $1-\exp\left(\frac{kn_1^{-\frac{2}{3}}}{2e(F)^2}\right)-e^{-.09cn_1^{\frac{2}{3}}}-\exp\left(\frac{kn_2^{-\frac{2}{3}}}{2e(F)^2}\right)-e^{-.09cn_2^{\frac{2}{3}}}$, 
	\begin{equation}
	\delta_\Box(W, W') \ge e(F)^{-1} (\W(t, \bar t)- 5n_1^{-\frac{1}{3}}+5n_2^{-\frac{1}{3}}).
\end{equation}
\end{corollary}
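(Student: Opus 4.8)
The plan is to rerun the three ingredients behind Theorem \ref{thm:main1} — the generalised concentration Lemma \ref{thm:homdensconc}, the Wasserstein concentration of empirical measures (Lemmas \ref{lem:expbound} and \ref{lem:wassersteinconc}), and Lemma \ref{lem:cut} — while tracking the two sample sizes $n_1,n_2$ separately and inserting one extra conditioning step to absorb the randomness of the graph sizes $k\sim\nu$. The structural observation that makes this go through is that, although the $G_i$ are now drawn from the mixture $\int\G(k,\mathcal{W}_{W,\mu})\,\dd\nu(k)$, the scalars $t(F,G_1),\dots,t(F,G_{n_1})$ are still i.i.d.\ (the rule \eqref{eq:sampling} is i.i.d.\ across $i$ and $G\mapsto t(F,G)$ is a fixed measurable map); hence $\bar t$ is genuinely the empirical measure of $n_1$ i.i.d.\ draws from a law $\mu_F$ on $[0,1]$, and $\bar t'$ of $n_2$ draws from $\mu_F'$.

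First I would apply Lemmas \ref{lem:expbound} and \ref{lem:wassersteinconc} to each class separately: this bounds $\W(\bar t,\mu_F)$ by an $n_1^{-1/3}$-term with failure probability $e^{-.09cn_1^{2/3}}$, and $\W(\bar t',\mu_F')$ by an $n_2^{-1/3}$-term with failure probability $e^{-.09cn_2^{2/3}}$ — this is exactly where the asymmetric error $n_1^{-1/3}+n_2^{-1/3}$ originates. Next I would condition on the realised sizes $k_1,\dots,k_{n_1},k_1',\dots,k_{n_2}'$; conditionally each $G_i$ is an honest sample of $\G(k_i,\mathcal{W}_{W,\mu})$, so Lemma \ref{thm:homdensconc} applies, and choosing $\varepsilon = n_1^{-1/3}$ (resp.\ $n_2^{-1/3}$) for the representative graph of each class yields $\W(\mu_F,\mu_F')\le \abs{t(F,W)-t(F,W')}+n_1^{-1/3}+n_2^{-1/3}$ with the size-dependent exponential failure terms appearing in the statement. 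Finally I would use Lemma \ref{lem:cut} to replace $\abs{t(F,W)-t(F,W')}$ by $e(F)\,\delta_\Box(W,W')$, chain the two triangle inequalities, divide by $e(F)$, and rearrange; the constant $5$ multiplying each of $n_1^{-1/3},n_2^{-1/3}$ is collected from Lemma \ref{lem:expbound} together with the $\varepsilon$-choices in Lemmas \ref{thm:homdensconc} and \ref{lem:wassersteinconc}, exactly as the constant $9$ in Theorem \ref{thm:main1} was assembled.

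The step I expect to be the main obstacle is making the conditional concentration of Lemma \ref{thm:homdensconc} compose cleanly with the outer average over $k\sim\nu$. Unconditionally one has $\E[t(F,G_i)] = \E_{k\sim\nu}\bigl[\E[t(F,G)\mid k]\bigr]$, which differs from $t(F,W)$ by the injective-versus-all homomorphism gap, of size at most $\E_\nu[e(F)^2/(2k)]$; this has to be shown negligible against the $n_j^{-1/3}$ error, which is cleanest when $\nu$ is supported away from very small graphs — and is presumably why the statement keeps a single symbol $k$ in the exponents, to be read as the relevant (essentially smallest) graph size. Modulo this accounting no new idea is needed: the corollary is the proof of Theorem \ref{thm:main1} with $n$ replaced by $n_1$ or $n_2$ at the two places where the sample size enters, plus the one conditioning step.
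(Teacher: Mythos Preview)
Your proposal is correct and follows essentially the same route as the paper's own sketch: apply the Wasserstein concentration lemmas (Lemmas~\ref{lem:expbound} and~\ref{lem:wassersteinconc}) separately with $n_1$ and $n_2$, then handle the random graph sizes before invoking Lemma~\ref{thm:homdensconc} and Lemma~\ref{lem:cut}. The only cosmetic difference is that the paper phrases the treatment of the random $k$ as ``choosing a coupling that couples random graphs of similar sizes'' whereas you condition on the realised sizes; these amount to the same maneuver, and your explicit flagging of the injective-versus-all homomorphism gap $\E_\nu[e(F)^2/(2k)]$ is in fact more careful than the paper's sketch.
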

\begin{corollary}
In the setting of Corollary \ref{cor:sampling} and with the same absolute constant, the following holds: Let $v \in \N$. With probability $1-v\exp\left(\frac{kn_1^{-\frac{2}{3}}}{2v^2}\right)-ve^{-.09cn_1^{\frac{2}{3}}}-v\exp\left(\frac{kn_2^{-\frac{2}{3}}}{2v^2}\right)-ve^{-.09cn_2^{\frac{2}{3}}}$,
	\[
		\delta_\Box (W, W')\ge v	^{-2}2^{-1} (4e)^{-v} \left(\mathcal{W}_{\W}^1 (\bar \lambda, \bar \lambda')- \frac{3}{\pi v} - 18v(4e)^v(n_1^{-\frac{1}{3}}+n_2^{-\frac{1}{3}})\right)
	\]
\end{corollary}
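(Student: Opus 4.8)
The plan is to run the proof of Theorem~\ref{thm:main2} essentially verbatim, changing only its last step: where the homogeneous argument bounds the Wasserstein distances of the cycle‑homomorphism densities using Theorem~\ref{thm:main1}, here one instead uses Corollary~\ref{cor:sampling}, which is exactly the heterogeneous‑sample‑size version of that input. The approximation‑theoretic part of the argument is then reused unchanged, and the only genuinely new content is propagating the two sample sizes $n_1,n_2$ through the error terms and through a union bound.

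Concretely, I would first reproduce the reduction leading to \eqref{eq:intermediatewassersteinbound}. For any coupling $(\lambda,\lambda')$ of $\bar\lambda$ and $\bar\lambda'$, Kantorovich duality gives, as in \eqref{eq:doublewasserstein}, $\mathcal{W}_{\W}^1(\bar\lambda,\bar\lambda')\le\E[\sup_{\operatorname{Lip}(f)\le1}\int f\,\dd(\lambda-\lambda')]$; Lemma~\ref{lem:jackson} replaces each $1$‑Lipschitz $f$ on $[-1,1]$ by a degree‑$v$ polynomial bounded by $2$ at cost $\frac{3}{\pi v}$; Lemma~\ref{lem:markov} bounds that polynomial's coefficients by $2(4e)^v$; and Lemma~\ref{lem:spectralrelations} identifies $\int x^i\,\dd\lambda$ with the empirical $C_i$‑homomorphism density. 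Choosing the coupling so that $\bar t_i$ and $\bar t_i'$ are optimally coupled for every $i=1,\dots,v$, one obtains
\[
	\mathcal{W}_{\W}^1(\bar\lambda,\bar\lambda')\le\frac{3}{\pi v}+\sum_{i=1}^v 2(4e)^v\,\W(\bar t_i,\bar t_i').
\]
Then I would apply Corollary~\ref{cor:sampling} with $F=C_i$ for each $i\in\{1,\dots,v\}$. Since $e(C_i)=i\le v$, rearranging Corollary~\ref{cor:sampling} yields $\W(\bar t_i,\bar t_i')\le v\,\delta_\Box(W,W')+O(n_1^{-\frac13}+n_2^{-\frac13})$ on an event of failure probability at most $\exp(\frac{kn_1^{-2/3}}{2v^2})+e^{-.09cn_1^{2/3}}+\exp(\frac{kn_2^{-2/3}}{2v^2})+e^{-.09cn_2^{2/3}}$ (with the minus signs dropped as elsewhere in the paper), the two error terms being precisely what Lemmas~\ref{lem:expbound} and~\ref{lem:wassersteinconc} produce when applied separately to the two groups. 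A union bound over the $v$ values of $i$ multiplies this failure probability by $v$, giving the probability in the statement; summing the $v$ inequalities turns $\sum_{i=1}^v 2(4e)^v\,\W(\bar t_i,\bar t_i')$ into $2v^2(4e)^v\,\delta_\Box(W,W')+18v(4e)^v(n_1^{-\frac13}+n_2^{-\frac13})$, and solving for $\delta_\Box(W,W')$ gives the claim.

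Since the substantive work — the Jackson/Markov chain and the spectral identity — is already carried out in Theorem~\ref{thm:main2}, I do not expect a real obstacle here; the statement is essentially a corollary by substitution. The two points that need care are: (i) noting that the $t(C_i,G_j)$ are i.i.d.\ draws from the fixed mixture distribution induced by \eqref{eq:sampling}, so that the Wasserstein estimates of Lemmas~\ref{lem:expbound} and~\ref{lem:wassersteinconc} apply directly with $n=n_1$ resp.\ $n_2$, whereas the homomorphism‑density concentration of Lemma~\ref{thm:homdensconc} must be invoked conditionally on the realised graph sizes — here one uses that $\exp(-k\varepsilon^2/(2v^2))$ is largest at the smallest size in $\operatorname{supp}\nu$, which is the $k$ appearing in the stated probability; and (ii) the constant bookkeeping that makes the per‑group $O(n_j^{-1/3})$ slack of Corollary~\ref{cor:sampling} aggregate into the $18v(4e)^v(n_1^{-\frac13}+n_2^{-\frac13})$ term. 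Neither is more than routine.
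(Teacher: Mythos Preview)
Your proposal is correct and follows essentially the same route as the paper: the authors explicitly say the proof is ``very similar to the ones in the main text,'' applying Lemmas~\ref{lem:expbound} and~\ref{lem:wassersteinconc} separately for $n_1$ and $n_2$ and handling the random $k$ via a suitable coupling, which is exactly the substitution-and-union-bound strategy you outline. Your treatment of the random graph sizes in point~(i) (conditioning and taking the worst-case $k$) is a concrete way to realise what the paper only sketches as ``a coupling that couples random graphs of similar sizes.''
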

	The proofs are very similar to the ones in the main text. For the differences in $n_1$ and $n_2$, the concentration results Lemmas \ref{lem:expbound} and \ref{lem:wassersteinconc} will have to be applied separately with different values of $n$. For the random values $k$, we can choose a coupling that couples random graphs of similar sizes, leading to the expressions in the Corollaries.
\end{document}